\newcounter{thm_counter}
\newcounter{lem_counter}
\newcounter{pro_counter}
\newcounter{ass_counter}
\numberwithin{ass_counter}{section}
\newcounter{exam_counter}
\newtheorem{theorem}[thm_counter]{Theorem}
\newtheorem{proposition}[pro_counter]{Proposition}
\newtheorem{lemma}[lem_counter]{Lemma}
\newtheorem{assumption}[ass_counter]{Assumption}
\newtheorem{example}[exam_counter]{Example}
\newcommand{\ns}{{|\mathcal{S}|}}
\newcommand{\nsa}{{|\mathcal{S}||\mathcal{A}|}}
\newcommand{\tb}[1]{{\textbf{#1}}}
\newcommand{\E}{\mathbb{E}}
\newcommand{\C}{\mathbb{C}}
\newcommand{\R}{\mathbb{R}}
\icmltitlerunning{Average-Reward Off-Policy Policy Evaluation with Function Approximation}
\begin{document}
\twocolumn[
\icmltitle{Average-Reward Off-Policy Policy Evaluation with Function Approximation
}



\icmlsetsymbol{equal}{*}

\begin{icmlauthorlist}
\icmlauthor{Shangtong Zhang}{ox,equal}
\icmlauthor{Yi Wan}{ua,equal}
\icmlauthor{Richard S. Sutton}{ua}
\icmlauthor{Shimon Whiteson}{ox}
\end{icmlauthorlist}

\icmlaffiliation{ua}{University of Alberta}
\icmlaffiliation{ox}{University of Oxford}

\icmlcorrespondingauthor{Shangtong Zhang}{{{\\shangtong.zhang}@cs.ox.ac.uk}}
\icmlcorrespondingauthor{Yi Wan}{{wan6@ualberta.ca}}

\icmlkeywords{}

\vskip 0.3in
]



\printAffiliationsAndNotice{\icmlEqualContribution} 

\begin{abstract}
We consider off-policy policy evaluation with function approximation (FA) in average-reward MDPs, 
where the goal is to estimate both the reward rate and the differential value function. 
For this problem, bootstrapping is necessary and, along with off-policy learning and FA, results in the deadly triad \citep{sutton2018reinforcement}.
To address the deadly triad, we propose
two novel algorithms,
reproducing the celebrated success of Gradient TD algorithms in the average-reward setting.
In terms of estimating the differential value function, the algorithms are the first convergent off-policy linear function approximation algorithms.
In terms of estimating the reward rate,
the algorithms are the first convergent off-policy linear function approximation algorithms that do not require estimating the density ratio.
We demonstrate empirically the advantage of the proposed algorithms, 
as well as their nonlinear variants,
over a competitive density-ratio-based approach,
in a simple domain as well as challenging robot simulation tasks.

\end{abstract}

\section{Introduction}
A fundamental problem in average-reward Markov Decision Processes (MDPs, see, e.g., \citet{puterman2014markov}) is \emph{policy evaluation},
that is,
estimating, for a given policy, the \emph{reward rate} and the \emph{differential value function}.
The reward rate of a policy is the average reward per step and thus measures the policy's long term performance.
The differential value function summarizes 
the expected cumulative future excess rewards, 
which are the differences between received rewards and the reward rate.
The solution of the policy evaluation problem is  interesting in itself because it provides a useful performance metric, the reward rate, for a given policy.
In addition, it is an essential part of many \emph{control} algorithms, which aim to generate a policy that maximizes the reward rate
by iteratively improving the policy using its estimated differential value function (see, e.g., \citet{howard1960dynamic,konda2002thesis,abbasi2019politex}).

One typical approach in policy evaluation is to learn from real experience directly,
without knowing or learning a model.
If the policy followed to generate experience (behavior policy) is the same as the policy of interest (target policy), then this approach yields an \emph{on-policy} method; 
otherwise, it is \emph{off-policy}.
Off-policy methods are usually more practical in settings in which
following bad policies incurs prohibitively high cost \citep{dulac2019challenges}.
For policy evaluation,
we can use either tabular methods,
which maintain a look-up table to store quantities of interest (e.g., the differential values for all states) separately,
or use function approximation,
which represents these quantities collectively,
possibly in a more efficient way (e.g., using a neural network).
Function approximation methods are necessary for MDPs with large state and/or action spaces because they are scalable in the size of these spaces and also generalize to states and actions that are not in the data \citep{mnih2015human,silver2016mastering}.
Finally, for the policy evaluation problem in average reward MDPs, the agent's stream of experience never terminates and thus actual returns cannot be obtained. Because of this, learning algorithms have to bootstrap, that is, the estimated values must be updated towards targets that include existing estimated values instead of actual returns.

In this paper, we consider methods for solving the average-reward policy evaluation problem with all the above three elements (off-policy learning, function approximation and bootstrapping), which comprise the deadly triad (see Chapter 11 of \citet{sutton2018reinforcement} and Section~\ref{sec:sgq}).
The main contributions of this paper are two newly proposed methods to break this deadly triad in the average-reward setting,
both of which are inspired by the celebrated success of the Gradient TD family of algorithms \citep{sutton2009convergent,sutton2009fast} in breaking the deadly triad in the discounted setting.

Few methods exist for learning differential value functions.
These are either on-policy linear function approximation methods \citep{tsitsiklis1999average,konda2002thesis,yu2009convergence,abbasi2019politex} or off-policy tabular methods \citep{wan2020learning}.
The on-policy methods 
use the empirical average of received rewards as an estimate for the reward rate.
Thus they are not straightforward to extend to the off-policy case.
And, as we show later with a counterexample, the naive extension of the off-policy tabular method by \citet{wan2020learning} to the linear function approximation setting can diverge,
exemplifying the deadly triad. 
By contrast,
\emph{the two algorithms we propose are the first provably convergent methods for learning the differential value function via off-policy linear function approximation}.

All existing methods for estimating reward rate in off-policy function approximation setting
require learning the \emph{density ratio}, i.e., the ratio between the stationary distribution of the target policy and the sampling distribution \citep{liu2018breaking,zhang2020gendice,zhang2020gradientdice,mousavi2020blackbox,lazic2020maximum}. 
Interestingly, while density-ratio-based methods dominate off-policy policy evaluation with function approximation in average-reward MDPs, in the discounted MDPs,
both density-ratio-based \citep{hallak2017consistent,liu2018breaking,gelada2019off,nachum2019dualdice,uehara2019minimax,xie2019towards,tang2019doubly,zhang2020gendice,zhang2020gradientdice} and value-based \citep{baird1995residual,sutton2009convergent,sutton2009fast,sutton2016emphatic,ope:thomas2015high,jiang2015doubly} methods have succeeded. It thus remains unknown whether a convergent value-based method could be found for such a problem and if it exists, how it performs compared with density-ratio-based methods.
\emph{The two algorithms we propose are the first provably convergent differential-value-based methods for reward rate estimation via off-policy linear function approximation},
which answer the question affirmatively.
Furthermore,
our empirical study shows that our value-based methods consistently outperform a competitive density-ratio-based approach, 
GradientDICE \citep{zhang2020gradientdice}, 
in the tested domains,
including both a simple Markov chain and challenging robot simulation tasks.

\section{Background}
In this paper, we use $\norm{\cdot}_M$ to denote the vector norm induced by a positive definite matrix $M$, i.e., $\norm{x}_M = \sqrt{x^\top M x}$. 
We also use $\norm{\cdot}_M$ to denote the corresponding induced matrix norm.
When $M = I$, we ignore the subscript $I$ and write $\norm{\cdot}$ for simplicity.
All vectors are column vectors.
$\tb{0}$ denotes an all-zero vector whose dimension can be deduced from the context.
$\tb{1}$ is similarly defined.
When it does not confuse,
we use a function and a vector interchangeably.
For example, if $f$ is a function from $\mathcal{X}$ to $\R$,
we also use $f$ to denote the corresponding vector in $\R^{|\mathcal{X}|}$.

We consider an infinite horizon MDP with a finite state space $\mathcal{S}$,
a finite action space $\mathcal{A}$,
a reward function $r: \mathcal{S} \times \mathcal{A} \to \R$,
and a transition kernel $p: \mathcal{S} \times \mathcal{S} \times \mathcal{A} \to [0, 1]$.
When an agent follows a policy $\pi: \mathcal{A} \times \mathcal{S} \to [0, 1]$ in the MDP, at time step $t$, 
the agent observes a state $S_t$, takes an action $A_t \sim \pi(\cdot | S_t)$,
receives a reward $r(S_t, A_t)$, proceeds to the next time step and observes the next state $S_{t+1} \sim p(\cdot | S_t, A_t)$.
The reward rate of policy $\pi$ is defined as
\begin{align} \label{eq: reward rate definition}
\textstyle{r_\pi \doteq C \text{-} \lim_{t \to \infty} \E[ r(S_t, A_t) \mid \pi, S_0]},
\end{align}
where $C \text{-}\! \lim_{T \to \infty} z_T \doteq \lim_{T \to \infty} \frac{1}{T + 1} \sum_{i=0}^T z_i$ is the Cesaro limit. The Cesaro limit in \eqref{eq: reward rate definition} is assumed to exist and is independent of $S_0$. The most general assumption that guarantees these is the following one:
\begin{assumption}
\label{assu:chain}
Policy $\pi$ induces a unichain. 
\end{assumption}
The action-value function in the average-reward setting is known as the differential action-value function and is defined as $q_\pi(s, a) \doteq \textstyle{C\text{-}\lim_{T \to \infty} } \textstyle{ \sum_{t = 0}^T \E[r(S_{t}, A_{t}) - r_\pi \mid S_0 = s, A_0 = a ]}$. 
Note that if a stronger ergodic chain assumption is used instead, 
the Cesaro limit in defining $r_\pi$ and $q_\pi$ is equivalent to the normal limit.
The action-value Bellman equation is
\begin{align}
\label{eq:differential-bellman}
q = r - \bar{r}\tb{1} + P_\pi q,
\end{align}
where $q \in \R^\nsa$ and $\bar{r} \in \R$ are free variables and $P_\pi \in \R^{\nsa \times \nsa}$ is the transition matrix, that is,
$P_\pi((s, a), (s', a')) \doteq p(s'|s, a) \pi(a'|s')$.
It is well-known \citep{puterman2014markov} that $r = r_\pi$ is the unique solution for $r$ and all the solutions for $q$ form a set $\{q_\pi + c \tb{1}: \forall c \in \R \}$.

In this paper, we consider a special off-policy learning setting, 
where the agent learns from i.i.d. samples drawn from a given sampling distribution.
In particular, at the $k$-th iteration, the agent draws a sample $(S_k, A_k, R_k, S_k', A_k')$ from a given sampling distribution $d_{\mu \pi}$. 
Distribution $d_{\mu\pi}$ can be any distribution satisfying 
\begin{assumption} \label{assu: positive dmu}
$R_k = r(S_k, A_k)$, $S_k' \sim p(\cdot | S_k, A_k)$, $A_{k}' \sim \pi(\cdot | S_k')$, and $d_\mu(s, a) > 0$ for all $(s, a)$,
\end{assumption}
where $d_\mu(s, a)$ denotes the marginal distribution of $(S_k, A_k)$. The last part of Assumption~\ref{assu: positive dmu} means that every state-action pair is possible to be sampled. 
This is a necessary condition for learning the differential value function accurately for all state-action pairs.
In the rest of the paper, the expectation
$\E$ is taken w.r.t. $d_{\mu\pi}$.

If no sampling distribution is given, one could instead draw samples in the following way.
First randomly sample $(S_k, A_k, R_k, S_k')$ from a batch of transitions 
collected by one or multiple agents, with all agents following possibly different unknown policies in the same MDP. Then sample $A_k' \sim \pi(\cdot \mid S_k')$. 
Assuming that the number of all state-action pairs in the batch grows to infinity as the batch size grows to infinity
then sampling from the batch is approximately equivalent to sampling from some distribution satisfying Assumption~\ref{assu: positive dmu}.


Our goal is to approximate, using the data generated from $d_{\mu \pi}$, both the reward rate and the differential value function. The reward rate $r_\pi$ is approximated by a learnable scalar $\hat r$. The differential value function $q_\pi$ is only approximated up to a constant. That is, we are only interested in approximating $q_\pi + c\tb{1}$ for some $c \in R$. This is sufficient if the approximated value function is only used for policy improvement in a control algorithm. 
However, when the state and/or action spaces are large, function approximation may be necessary to represent $q_\pi + c\tb{1}$. 
This paper mainly considers linear function approximation, where the agent is given a feature mapping $x$ that generates a $K$-dimensional vector $x(s, a)$ given a state-action pair $(s, a)$. The agent further maintains a learnable weight vector $w \in \R^K$ and adjusts it to approximate, for all $(s, a)$, $q_\pi(s, a) + c$ using $x(s, a)^\top w$. Let $X \in \R^{\nsa \times K}$ be the feature matrix whose $(s, a)$ row is $x(s, a)^\top$. For the uniqueness of the solution for $w$, it is common to make the following assumption:
\begin{assumption} \label{assu:linearly independent}
$X$ has linearly independent columns. 
\end{assumption}

\section{Differential Semi-Gradient $Q$ Evaluation}
\label{sec:sgq}


We first present \emph{Differential Semi-gradient $Q$ Evaluation} (Diff-SGQ), 
which is a straightforward extension of the tabular off-policy Differential TD-learning algorithm \citep{wan2020learning} to linear function approximation.

At the $k$-th iteration, the algorithm draws a sample $(S_k, A_k, R_k, S_k', A_k')$ from $d_{\mu\pi}$ and updates $w_k$ and $\hat{r}_k$ as
\begin{align} 
w_{k+1} &\doteq w_k + \alpha_k \delta_k(w_k, \hat r_k) x_k, \label{eq: Differential FQE 1}\\
\hat{r}_{k+1} &\doteq \hat{r}_k + \alpha_k \delta_k(w_k, \hat r_k) \label{eq: Differential FQE 2},
\end{align}
where $\alpha_k$ is the stepsize used at $k$-th iteration, $x_k \doteq x(S_k, A_k)$, $x_k' \doteq x(S_k', A_k')$, 
and $\delta_k(w, \hat r) \doteq R_k - \hat r + x_k'^\top w - x_k^\top w$ 
is the temporal difference error.
From~\eqref{eq:differential-bellman}, 
it is easy to see $r_\pi = d^\top (r + P_\pi q_\pi - q_\pi)$ holds for any probability distribution $d$; in particular, it holds for $d = d_\mu$,
which is the intuition behind the $\hat{r}$ update~\eqref{eq: Differential FQE 2}.
Diff-SGQ iteratively solves 
\begin{align}
\label{eq: TD fixed point 1}
    \E [\delta_k(w, \hat r)  x_k] = \tb{0} , \qq{and} \E  [\delta_k(w, \hat r) ] = 0,
\end{align}
whose solutions, if they exist, are \emph{TD fixed points}. A TD fixed point is an approximate solution to \eqref{eq:differential-bellman} using linear function approximation.  We consider the quality of the approximation in the next section. 
All the proposed algorithms in this paper aim to find a TD fixed point up to some regularization bias if necessary.

In general, there could be no TD fixed point, one TD fixed point, or infinitely many TD fixed points, as in the discounted setting. To see this, let $y_k \doteq [1, x^\top_k]^\top$, $y_k' \doteq [1, x'^\top_k]^\top, u \doteq [\hat{r}, w^\top]^\top$, and $e_1 \doteq [1, 0, \cdots, 0]^\top \in \R^{K+1}$. Then combining \eqref{eq: Differential FQE 1} and \eqref{eq: Differential FQE 2} gives
\begin{align} 
\E[\delta_k(u) y_k] = \tb{0},
\label{eq: TD fixed point variant}
\end{align}
where $\delta_k(u) \doteq R_k - e_1^\top u  + y_k'^\top u - y_k^\top u$.
Writing \eqref{eq: TD fixed point variant} in vector form, we have $Au + b = \tb{0}$,
where 
\begin{align}
A &\doteq \E[y_k(-e_1 + y'_k - y_k)^\top]\\ 
& = Y^\top D(P_\pi - I) Y - Y^\top d_\mu e_1^\top \\
& = \mqty[-1 &\tb{1}^\top D(P_\pi - I)X \\-X^\top d_\mu & X^\top D(P_\pi - I)X],  \\
b &\doteq \E[y_k R_k] = Y^\top D r,
Y  \doteq [\tb{1}, X], D \doteq diag(d_\mu) .
\end{align}
If and only if $A$ is invertible, there exists a unique TD fixed point 
\begin{align} \label{eq: TD fixed point form 1}
    u_\text{TD} \doteq - A^{-1} b.
\end{align}
Otherwise, there is either no TD fixed point or there are infinitely many.



Unfortunately, 
even if there exists a unique TD fixed point, 
Diff-SGQ can still diverge,
which exemplifies the deadly triad \citep{sutton2018reinforcement} in the average-reward setting.
The following example
confirms this point.

\begin{example}[The divergence of Diff-SGQ]

Consider a two-state MDP (Figure \ref{fig:counterexample}). The expected Diff-SGQ update per step can be written as
$
\mqty[\hat r_{k+1} \\ w_{k+1}]  = \mqty[\hat r_{k} \\ w_{k}] + \alpha \left(A \mqty[\hat r_{k} \\ w_{k}] + b\right)
= \mqty[\hat r_{k} \\ w_{k}]  + \alpha \mqty[-1 & 6 \\ -2 & 6] \mqty[\hat r_{k} \\ w_{k}].
$
Here, we consider $\alpha$ a constant stepsize.
The eigenvalues of $A = \mqty[-1 & 6 \\ -2 & 6]$ are both positive. 
Hence, no matter what positive stepsize is picked, 
the expected update diverges.
The sample updates~\eqref{eq: Differential FQE 1} and \eqref{eq: Differential FQE 2} using standard stochastic approximation stepsizes,
therefore,
also diverge. Furthermore, because both eigenvalues are positive, A is an invertible matrix, implying the unique existence of the TD fixed-point. 

\begin{figure}[tbh]
\centering
\includegraphics[width=0.8\linewidth]{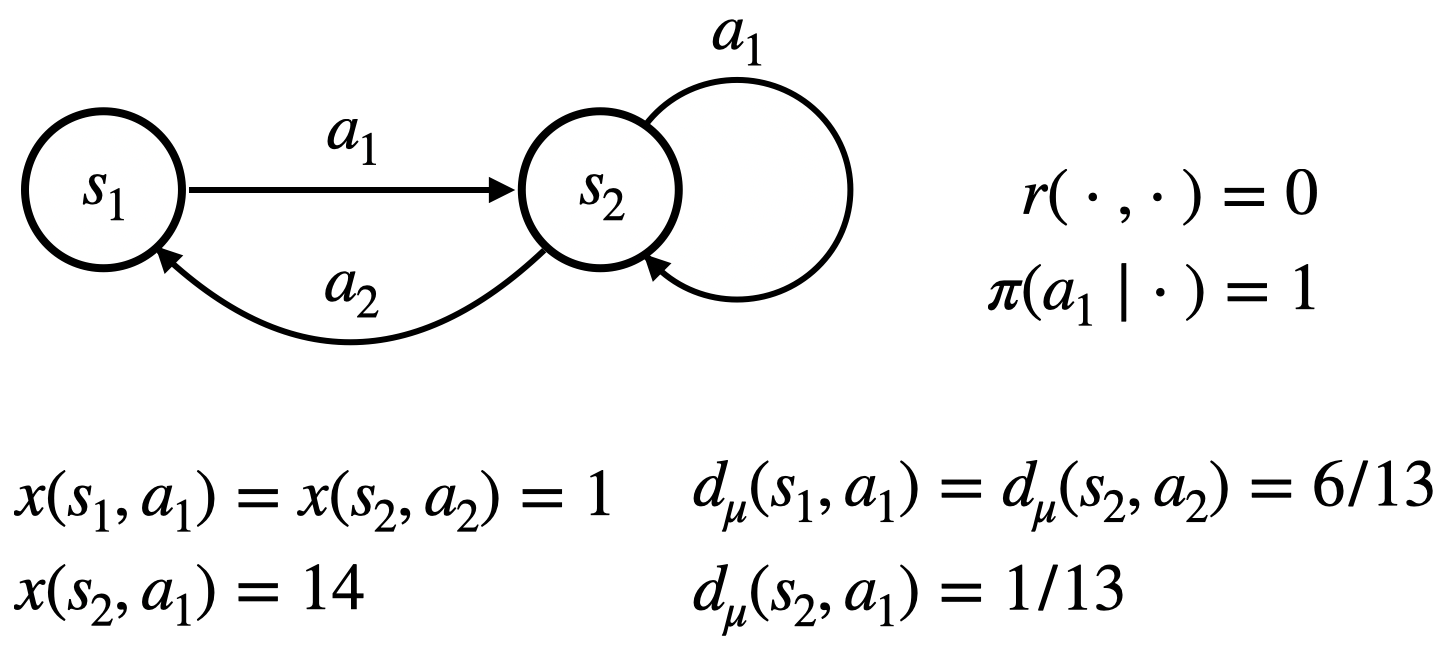}
\caption{\label{fig:counterexample}
An example showing the divergence of Diff-SGQ.
}
\end{figure}

\end{example}


\section{One-Stage Differential Gradient $Q$ Evaluation}
We now present an algorithm that is guaranteed to converge to the TD fixed point $\eqref{eq: TD fixed point variant}$ if it uniquely exists.
Motivated by the Mean Squared Projected Bellman Error (MSPBE) defined in the discounted setting and used by Gradient TD algorithms, we define the MSPBE in the average-reward setting as
\begin{align} \label{eq: MSPBE vector}
\text{MSPBE}_1(u) &= \norm{\Pi_Y \bar{\delta}(u)}^2_D,
\end{align}
where $\Pi_Y \doteq Y(Y^\top DY)^{-1}Y^\top D$ is the projection matrix and $\bar{\delta}(u) \doteq r - e_1^\top u \tb{1} + P_\pi Yu - Yu$
is the vector of TD errors for all state-action pairs.
The vector $\Pi_Y \bar \delta(u)$ is the projection of the vector of TD errors on the column space of $Y$. 
The existence of the matrix inverse in $\Pi_Y$, $(Y^\top DY)^{-1}$, is guaranteed by Assumption \ref{assu: positive dmu} and 
\begin{assumption}
\label{assu:nonconstant}
For any $w \in \R^{K}$ such that $w \neq \tb{0}$ and for any $c \in \R$, 
$Xw \neq c\tb{1}$.
\end{assumption}
The above assumption guarantees that if $w^*$ is a solution for $w$ in \eqref{eq: TD fixed point 1}, then no other solution's approximated action-value function would be identical to $Xw^*$ up to a constant. 
This assumption is also used by \citet{tsitsiklis1999average} in their on-policy policy evaluation algorithms in average-reward MDPs. 
Apparently the assumption does not hold in the tabular setting (i.e., when $X = I$).
However, with function approximation, we usually have many more states than features (i.e., $\ns \gg K$),
in which case the above assumption would not be restrictive.

Let $C \doteq Y^\top D Y$, we have $\Pi^\top D \Pi = DY C^{-1} Y^\top D$, with which we give a different form for \eqref{eq: MSPBE vector}:
\begin{align}
\label{eq: MSPBE} 
    \text{MSPBE}_1(u) &= \norm{Y^\top D \bar{\delta}(u)}^2_{C^{-1}} = \norm{Au + b}^2_{C^{-1}}\\
    &= \E[\delta_k(u) y_k]^\top \E[y_k y_k^\top]^{-1} \E[\delta_k(u) y_k].
\end{align}

It can be seen 
that if \eqref{eq: TD fixed point variant} has a solution, 
then that solution also minimizes \eqref{eq: MSPBE}, 
in which case solving \eqref{eq: TD fixed point variant} can be converted to minimizing \eqref{eq: MSPBE}.
However, when \eqref{eq: TD fixed point variant} does not have a unique solution, the set of minimizers of \eqref{eq: MSPBE} could be unbounded and thus algorithms minimizing $\text{MSPBE}_1$ risk generating unbounded updates.
To ensure the stability of our algorithm when \eqref{eq: TD fixed point variant} does not have a unique solution,
we use a regularized $\text{MSPBE}_1$ as our objective:
\begin{align}
    J_{1, \eta}(u) \doteq \norm{Au + b}_{C^{-1}}^2 + \eta u^\top I_0 u \label{eq: MSPBE + reg},
\end{align}
where $I_0 \doteq diag(\tb{1} - e_1)$, $\eta$ is a positive scalar, and $\eta u^\top I_0 u = \eta \norm{w}^2$ is a ridge regularization term on $w$.

To minimize $J_{1, \eta}(u)$, 
one could proceed with techniques used in TDC \citep{sutton2009fast},
which we leave for future work. 
In this paper,
we proceed with the saddle-point formulation of GTD2 introduced by \citet{liu2015finite},
which exploits
Fenchel's duality:
\begin{align}
u^\top M^{-1} u = \max_\nu 2 u^\top \nu - \nu^\top M \nu,
\end{align}
for any positive definite $M$, yielding
\begin{align}
\label{eq:gq1-saddlepoint}
& J_{1, \eta}(u) \\
 = &\textstyle{\max_{\nu \in \R^{K+1}} 2 \nu^\top Y^\top D \bar{\delta}(u) - \nu^\top C \nu + \eta u^\top I_0 u}.
\end{align}
So 
$\min_u J_{1, \eta}(u) = \min_u \max_\nu J_{1, \eta}(u, \nu)$,
where 
\begin{align}
J_{1, \eta}(u, \nu) \doteq 2 \nu^\top Y^\top D \bar{\delta}(u) - \nu^\top C \nu + \eta u^\top I_0 u.
\end{align}
As $J_{1, \eta}(u, \nu)$ is convex in $u$ and concave in $\nu$,
we have now reduced the problem into a convex-concave saddle point 
problem.
Applying primal-dual methods to this
problem, 
that is,
performing gradient ascent for $\nu$ following $\nabla_\nu J_{1, \eta}(u, \nu)$ and gradient descent for $u$ following $\nabla_u J_{1, \eta}(u, \nu)$,
we arrive at our first new algorithm, \emph{One-Stage 
Differential Gradient $Q$ Evaluation}, or \emph{Diff-GQ1}. 
At the $k$-th iteration,
with a sample $(S_k, A_k, R_k, S_k', A_k')$ from $d_{\mu\pi}$,
Diff-GQ1 updates $u_k$ and $\nu_k$ as
\begin{align}
\label{eq:gq1-update}
\delta_k &\doteq R_k - e_1^\top u_k + y_k'^\top u_k - y_k^\top u_k, \\
\nu_{k+1} &\doteq \nu_k + \alpha_k (\delta_k - y_k^\top \nu_k) y_k, \\
u_{k+1} &\doteq u_k + \alpha_k(y_k - y_k' + e_1)y_k^\top \nu_k - \alpha_k \eta I_0 u_k,
\end{align}
where $\qty{\alpha_k}$ is the sequence of learning rates satisfying the following standard assumption:
\begin{assumption} \label{assu: stepsize}
$\{\alpha_k\}$ is a positive deterministic nonincreasing sequence s.t. $\sum_k \alpha_k = \infty$ and $\sum_k \alpha_k^2 < \infty$.
\end{assumption}
The algorithm is one-stage because, while there are two weight vectors updated in every iteration, 
both converge simultaneously.
\begin{theorem}
\label{thm:gq1-convergence}
If Assumptions \ref{assu:chain}, \ref{assu: positive dmu}, \ref{assu:nonconstant},  \& \ref{assu: stepsize} hold, then for any $\eta > 0$, almost surely,
the iterate $\qty{u_k}$ generated by Diff-GQ1~\eqref{eq:gq1-update} converges to $u_\eta^*$,
where $u^*_\eta \doteq -(\eta I_0 + A^\top C^{-1} A)^{-1}A^\top C^{-1}b$ is the unique minimizer of $J_{1, \eta}(u)$.
Further, if $A$ is invertible, then for $\eta = 0$, $\qty{u_k}$ converges almost surely to the $u_\text{TD}$ defined in \eqref{eq: TD fixed point form 1}.
\end{theorem}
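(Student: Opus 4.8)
The plan is to recognize the joint update as a single-timescale linear stochastic approximation (both $\nu$ and $u$ share the stepsize $\alpha_k$) and to apply the ODE method. Stacking the two iterates into $\theta_k \doteq [\nu_k^\top, u_k^\top]^\top$, the update \eqref{eq:gq1-update} takes the form $\theta_{k+1} = \theta_k + \alpha_k(G\theta_k + g + M_{k+1})$, where $M_{k+1}$ is a martingale-difference noise and the mean field is obtained by taking conditional expectations under the i.i.d. sampling. Using $\E[\delta_k(u)y_k] = Au + b$, $\E[y_k y_k^\top] = C$, and $\E[(e_1 + y_k - y_k')y_k^\top] = -A^\top$, I would compute
\begin{align}
G = \mqty[-C & A \\ -A^\top & -\eta I_0], \qquad g = \mqty[b \\ \tb{0}].
\end{align}
Because the MDP is finite, $y_k, y_k', R_k$ are bounded, so $M_{k+1}$ obeys the standard linear-growth bound in $\norm{\theta_k}$, and Assumption~\ref{assu: stepsize} controls the stepsizes.

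Given this, convergence reduces to showing that the associated ODE $\dot\theta = G\theta + g$ has a globally asymptotically stable equilibrium, for which it suffices that $G$ is Hurwitz; the Borkar--Meyn stability theory then yields both a.s. boundedness of $\{\theta_k\}$ and a.s. convergence to the unique equilibrium $\theta^\star = -G^{-1}g$. I would first record that $C = Y^\top D Y$ is positive definite by Assumptions~\ref{assu: positive dmu} and \ref{assu:nonconstant}.

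The main obstacle is establishing that $G$ is Hurwitz, i.e.\ that $H \doteq -G$ is positive stable. Its symmetric part is the block-diagonal positive semidefinite matrix $\mqty[C & 0 \\ 0 & \eta I_0]$, which is only \emph{semi}definite because $I_0$ annihilates the reward-rate coordinate; hence the textbook implication ``positive-definite symmetric part $\Rightarrow$ positive stable'' fails and I must rule out eigenvalues on the imaginary axis by hand. For an eigenpair $(\lambda, z)$ with $z = [p^\top, q^\top]^\top \neq \tb{0}$, taking real parts of $z^* H z = \lambda \norm{z}^2$ gives $\operatorname{Re}(\lambda)\norm{z}^2 = p^* C p + \eta\, q^* I_0 q \ge 0$. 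If $\operatorname{Re}(\lambda) = 0$, positive definiteness of $C$ forces $p = \tb{0}$ and $q^* I_0 q = 0$ forces $q = q_1 e_1$; the first block of $Hz = \lambda z$ then reads $-Aq = \lambda p = \tb{0}$, i.e.\ $q_1 A e_1 = \tb{0}$. Since the first entry of $A e_1$ is the $(1,1)$ entry of $A$, namely $-1 \neq 0$, we have $A e_1 \neq \tb{0}$, so $q_1 = 0$ and hence $z = \tb{0}$, a contradiction. Therefore every eigenvalue of $H$ has strictly positive real part and $G$ is Hurwitz for every $\eta > 0$.

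Finally I would identify the limit. Solving $G\theta^\star + g = \tb{0}$ block-wise gives $\nu^\star = C^{-1}(Au^\star + b)$ and $(A^\top C^{-1}A + \eta I_0)u^\star = -A^\top C^{-1}b$; the same $Ae_1 \neq \tb{0}$ argument shows $A^\top C^{-1}A + \eta I_0$ is positive definite, hence invertible, so $u^\star = u_\eta^*$, which is exactly the unique minimizer of the convex quadratic $J_{1,\eta}(u) = \norm{Au + b}_{C^{-1}}^2 + \eta u^\top I_0 u$. For the case $\eta = 0$ with $A$ invertible, the identical real-part argument rules out imaginary eigenvalues (now $Aq = \tb{0}$ directly forces $q = \tb{0}$), $G$ remains Hurwitz, and the equilibrium collapses to $\nu^\star = \tb{0}$, $u^\star = -A^{-1}b = u_\text{TD}$, completing the proof.
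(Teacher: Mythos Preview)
Your proposal is correct and follows essentially the same route as the paper: stack the iterates, identify the mean-field matrix $G=\begin{pmatrix}-C & A\\ -A^\top & -\eta I_0\end{pmatrix}$, invoke Borkar's linear stochastic-approximation theorem, and show $G$ is Hurwitz via the real part of $z^*Gz$ together with the structural fact $Ae_1\neq\tb{0}$. The only tactical difference is that the paper first proves $\det(G)\neq 0$ through the Schur complement (showing $\eta I_0 + A^\top C^{-1}A$ is positive definite) and then argues $\lambda\neq 0$ plus $\Re(\lambda)\le 0$ gives $\Re(\lambda)<0$, whereas you fold both steps into a single eigenvector-equation argument; the underlying ideas coincide.
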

We defer the full proof to Section~\ref{sec:proof-gq1}. 
\begin{proof}
(Sketch)
With $\kappa_k \doteq [\nu_k^\top, u_k^\top]^\top$,
we rewrite~\eqref{eq:gq1-update} as
\begin{align}
\kappa_{k+1} \doteq \kappa_k + \alpha_k (G_{k+1} \kappa_k + h_{k+1}),
\end{align}
where 
\begin{align}
G_{k+1} &\doteq \mqty[-y_k y_k^\top & y_k(y_k' - y_k)^\top - y_k e_1^\top \\
(y_k - y_k')y_k^\top  + e_1 y_k^\top & -\eta I_0], \\
h_{k+1} &\doteq \mqty[y_k r_k \\ \tb{0}].
\end{align}
The asymptotic behavior of $\qty{\kappa_k}$ is governed by
\begin{align}
\bar{G} &\doteq \E[G_{k+1}] = \mqty[-C & A \\ -A^\top & - \eta I_0], \bar{h} \doteq \E[h_{k+1}] = \mqty[b \\ \tb{0} ].
\end{align} 
The convergence of $\kappa_t$ to a unique point can be guaranteed if $\bar G$ is a Hurwitz matrix, or equivalently, if the real part of any eigenvalue of $\bar{G}$ is strictly negative.
Therefore, it is important to first ensure that $\bar{G}$ is nonsingular.
If $A$ was nonsingular,
we can show $\bar{G}$ being nonsingular easily even with $\eta = 0$.
However, in general, $A$ may not be nonsingular and therefore, we require $\eta > 0$ to ensure $\bar{G}$ being nonsingular.
We can easily show that the real part of any eigenvalue of $\bar{G}$ is strictly negative and thus $\bar G$ is Hurwitz.
Standard stochastic approximation results \citep{borkar2009stochastic} then show $\textstyle{\lim_k \kappa_k = -\bar{G}^{-1} \bar{h}}.$
Define $u^*_\eta$ as the lower half of $-\bar{G}^{-1}\bar{h}$,
we have $u^*_\eta = -(\eta I_0 + A^\top C^{-1} A)^{-1}A^\top C^{-1}b.$
It is easy to verify (e.g., using the first order optimality condition of $J_{1, \eta}(u)$) that $u^*_\eta$ is the unique minimizer of $J_{1, \eta}(u)$.
\end{proof}

\textbf{Quality of TD Fixed Points.}
We now analyze the quality of TD fixed points. For our analysis, we make the following assumption.
\begin{assumption}\label{assu: existence of TD fixed points}
There exists at least one TD fixed point.
\end{assumption}
Let $u^* = [\hat{r}^*, w^{*\top}]^\top$ be one fixed point (a solution of \eqref{eq: TD fixed point variant}). We are interested in the upper bound of the absolute value of the difference between the estimated reward rate and the true reward rate $|\hat{r}^* - r_\pi|$ and also the upper bound of the minimum distance between the estimated differential value function to the set $\qty{q_\pi + c\tb1}$. 
In general,
as long as there is representation error,
the TD fixed point can be arbitrarily poor in terms of approximating the value function, 
even in the discounted case (see \citet{kolter2011fixed} for more discussion).
In light of this,
we study the bounds only when $d_\mu$ is close to $d_\pi$, 
the stationary state-action distribution of $\pi$,
in the sense of the following assumption.
Let $\xi \in (0, 1)$ be a constant, 
\begin{assumption}
\label{assu:mupi} $F$ is positive semidefinite, where
\begin{align}
F \doteq \mqty[X^\top D X & X^\top DP_\pi X \\ X^\top P_\pi^\top D X & \xi^2 X^\top D X].
\end{align}
\end{assumption}
A similar assumption about $F$ is also used by \citet{kolter2011fixed}
in the analysis of the performance of the MSPBE minimizer in the discounted setting.
\citet{kolter2011fixed} uses $\xi = 1$ while we use $\xi < 1$ to account for the lack of discounting.
In Section~\ref{sec:assumption-simulation},
we show with simulation that this assumption holds with reasonable probability in our randomly generated MDPs.
Furthermore, we consider the bounds when all the features have zero mean under the distribution $d_\mu$. 
\begin{assumption}\label{assu: 0 mean feature vector}
$X^\top d_\mu = \tb{0}$.
\end{assumption}
This can easily be done by subtracting each feature vector sampled in our learning algorithm by some estimated mean feature vector, which is the empirical average of all the feature vectors sampled from $d_{\mu}$. 
Note without this mean-centered feature assumption,
a looser bound can also be obtained.
Our intention here is to show that bounds of our algorithms are on par with their counterparts in the discounted setting and thus one does not lose these bounds when one moves from the discounted setting to the average-reward setting.

\begin{proposition}
\label{prop:gq1-model-error}
Under Assumptions \ref{assu:chain}, \ref{assu: positive dmu}, \ref{assu:nonconstant}, \ref{assu: existence of TD fixed points} -- \ref{assu: 0 mean feature vector},
\begin{align}
&\inf_{c \in \R} \norm{Xw^* - q_\pi^c}_D \leq  \frac{\norm{P_\pi}_D + 1}{1 - \xi} \inf_{c \in \R} \norm{\Pi_X q_\pi^c - q_\pi^c}_D, \\
&|r_\pi - \hat r^*| \\
& \leq \textstyle{\frac{\norm{d_\mu^\top (P_\pi - I)}_{D^{-1}} (\norm{P_\pi}_D + 1)}{1 - \xi}} \inf_{c \in \R} \norm{\Pi_X q_\pi^c - q_\pi^c}_D,
\end{align}
where $q_\pi^c \doteq q_\pi + c \tb{1}$ and $\Pi_X = X(X^\top D X)^{-1}X^\top D$.
\end{proposition}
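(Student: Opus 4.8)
The plan is to exploit the two blocks of the fixed-point system $Y^\top D \bar{\delta}(u^*) = \tb{0}$ separately, following the MSPBE-quality analysis of \citet{kolter2011fixed} adapted to the average-reward setting. First I would note that since $(P_\pi - I)\tb{1} = \tb{0}$, the $\hat r$-component of $Yu^*$ drops out of $\bar{\delta}(u^*) = r - \hat r^* \tb{1} + (P_\pi - I) X w^*$, and then substitute the Bellman equation in the form $r = r_\pi \tb{1} + (I - P_\pi) q_\pi$ to obtain $\bar{\delta}(u^*) = (r_\pi - \hat r^*)\tb{1} - (I - P_\pi) e_c$, where $e_c \doteq X w^* - q_\pi^c$. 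The crucial observation is that $(I - P_\pi) e_c$ does not depend on $c$ (again because $(I-P_\pi)\tb{1} = \tb{0}$), so every identity below holds simultaneously for all $c$. The block $X^\top D \bar{\delta}(u^*) = \tb{0}$, combined with the zero-mean feature Assumption~\ref{assu: 0 mean feature vector} (which gives $X^\top D \tb{1} = X^\top d_\mu = \tb{0}$ and thus annihilates the $(r_\pi - \hat r^*)$ term), collapses to $X^\top D (I - P_\pi) e_c = \tb{0}$, equivalently $\Pi_X e_c = \Pi_X P_\pi e_c$. The scalar block $\tb{1}^\top D \bar{\delta}(u^*) = 0$, together with $\tb{1}^\top D \tb{1} = 1$, yields the clean identity $r_\pi - \hat r^* = d_\mu^\top (I - P_\pi) e_c$.

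The step I expect to be the main obstacle is translating Assumption~\ref{assu:mupi} into a usable contraction. I would show $F \succeq 0$ is equivalent to $\norm{\Pi_X P_\pi v}_D \le \xi \norm{v}_D$ for all $v$ in the column space of $X$. The quadratic form of $F$ at $(a,b)$ equals $\norm{Xa}_D^2 + 2(Xa)^\top D P_\pi (Xb) + \xi^2 \norm{Xb}_D^2$; fixing $v = Xb$ and minimising over $a \in \R^K$ (using the self-adjointness $\Pi_X^\top D = D \Pi_X$) gives minimiser $Xa = -\Pi_X P_\pi v$ and minimum value $\xi^2 \norm{v}_D^2 - \norm{\Pi_X P_\pi v}_D^2$. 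Because $a \mapsto Xa$ ranges over all of $\mathrm{col}(X)$ by Assumption~\ref{assu:linearly independent}, $F \succeq 0$ holds iff this minimum is nonnegative for every $v \in \mathrm{col}(X)$, which is exactly the claimed contraction. This is the average-reward analogue of Kolter's condition, with $\xi < 1$ playing the role that $\gamma$ plays under discounting.

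With the contraction in hand, I would split $e_c = g_c + h_c$ into its $D$-orthogonal components $g_c \doteq \Pi_X e_c = X w^* - \Pi_X q_\pi^c \in \mathrm{col}(X)$ and $h_c \doteq (I - \Pi_X) e_c = \Pi_X q_\pi^c - q_\pi^c$, the latter being precisely the representation error. The relation $\Pi_X e_c = \Pi_X P_\pi e_c$ reads $g_c = \Pi_X P_\pi g_c + \Pi_X P_\pi h_c$; applying the contraction to $g_c \in \mathrm{col}(X)$ and the bound $\norm{\Pi_X}_D \le 1$ to the second term gives $\norm{g_c}_D \le \xi \norm{g_c}_D + \norm{P_\pi}_D \norm{h_c}_D$, hence $\norm{g_c}_D \le \frac{\norm{P_\pi}_D}{1 - \xi}\norm{h_c}_D$. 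The triangle inequality $\norm{e_c}_D \le \norm{g_c}_D + \norm{h_c}_D$ together with $1 \le \frac{1}{1-\xi}$ then yields $\norm{e_c}_D \le \frac{\norm{P_\pi}_D + 1}{1 - \xi}\norm{h_c}_D$, and taking the infimum over $c$ proves the first inequality, since $\norm{h_c}_D = \norm{\Pi_X q_\pi^c - q_\pi^c}_D$.

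For the reward-rate bound I would start from $r_\pi - \hat r^* = d_\mu^\top (I - P_\pi) e_c$, valid for every $c$, and apply the weighted Cauchy--Schwarz inequality in the form $|\zeta^\top e_c| \le \norm{\zeta}_{D^{-1}} \norm{e_c}_D$ with $\zeta = (I - P_\pi)^\top d_\mu$, giving $|r_\pi - \hat r^*| \le \norm{d_\mu^\top (P_\pi - I)}_{D^{-1}} \norm{e_c}_D$ (the sign inside the norm being irrelevant). Taking the infimum over $c$ and substituting the first inequality delivers the second bound. The only remaining points needing care are the elementary identities $X^\top D \tb{1} = X^\top d_\mu$ and $\tb{1}^\top D \tb{1} = 1$, the self-adjointness of $\Pi_X$ used in the minimisation, and the fact that $\tb{1} \notin \mathrm{col}(X)$ (Assumption~\ref{assu:nonconstant}) so that all projections are well defined; these are routine.
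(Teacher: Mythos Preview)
Your proposal is correct and follows essentially the same approach as the paper's proof: both derive the contraction $\norm{\Pi_X P_\pi v}_D \le \xi\norm{v}_D$ on $\mathrm{col}(X)$ from Assumption~\ref{assu:mupi} via the Schur complement, use Assumption~\ref{assu: 0 mean feature vector} to eliminate the $(\hat r^* - r_\pi)$ contribution from the $X$-block of the fixed-point system, and finish the reward-rate bound with the weighted Cauchy--Schwarz inequality. The only difference is organizational---you work with the explicit $D$-orthogonal decomposition $e_c = g_c + h_c$ and bound $\norm{g_c}_D$ first, whereas the paper inserts $\Pi_X q_\pi^c$ into a triangle-inequality chain starting from $\norm{Xw^* - q_\pi^c}_D$ and uses the Bellman identities for $Xw^*$ and $q_\pi^c$ separately---but the underlying algebra is identical.
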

We defer the proof to Section~\ref{sec:gq1-model-error}. 
As a special case, there exists a unique TD fixed point in the on-policy case (i.e., $d_\mu = d_\pi$) under Assumptions~\ref{assu:chain},~\ref{assu:linearly independent}, and~\ref{assu:nonconstant}. 
Then $|r_\pi - \hat r^*| = 0$ as $d_\pi^\top (P_\pi - I) = \tb{0}$ and a tighter bound for the estimated differential value function can be obtained. See \citet{tsitsiklis1999average} for details. 

\textbf{Finite Sample Analysis.}
We now provide finite sample analysis for a variant of Diff-GQ1, \emph{Projected Diff-GQ1},
which is detailed in Section~\ref{sec:pgq1} in the appendix.
Projected Diff-GQ1 is different from Diff-GQ1 in three ways: 1) for each iteration, Projected Diff-GQ1 projects the two updated weight vectors to two bounded closed sets to ensure that the weight vectors do not become too large, 2) Projected Diff-GQ1 uses a constant stepsize, and 3) Projected Diff-GQ1 does not impose ridge regularization,
that is,
it considers the objective $\text{MSPBE}_1$ directly.
\begin{proposition}
(Informal) Under standard assumptions,
if Assumption~\ref{assu:mupi} holds and $A$ is nonsingular, 
with proper stepsizes, 
with high probability,
the iterates $\qty{\hat r_k}, \qty{w_k}$ generated by Projected Diff-GQ1 satisfy
\begin{align}
&(\bar{\hat r}_k - r_\pi)^2 = \textstyle{\mathcal{O}(\inf_{c \in \R}\norm{X \bar{w}_k - q_\pi^c }^2)} \\
=& \textstyle{\mathcal{O}(k^{-\frac{1}{2}})  + \mathcal{O}(\inf_{c \in \R} \norm{\Pi_X q_\pi^c - q_\pi^c}_D^2)},
\end{align}
where $\bar{\hat r}_k \doteq (1 / k) \sum_{i=1}^{k} \hat{r}_i, \bar{w}_k \doteq (1 / k) \sum_{i=1}^{k} w_i$.
\end{proposition}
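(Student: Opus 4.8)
The plan is to recognize Projected Diff-GQ1 as projected stochastic gradient descent--ascent on the convex--concave saddle function obtained from $\text{MSPBE}_1$ \emph{without} regularization, namely $J_1(u, \nu) \doteq 2\nu^\top Y^\top D \bar\delta(u) - \nu^\top C \nu$, for which $\max_\nu J_1(u, \nu) = \text{MSPBE}_1(u)$ by the same Fenchel-duality argument as in \eqref{eq:gq1-saddlepoint}. Since the samples $(S_k, A_k, R_k, S_k', A_k')$ are drawn i.i.d.\ from $d_{\mu\pi}$, the updates use unbiased estimates of $\nabla_\nu J_1$ and $\nabla_u J_1$; because the two weight vectors are projected onto bounded closed sets and the state--action space is finite (so features and rewards are bounded), all iterates and all stochastic gradients are uniformly bounded. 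I would first establish a high-probability $\mathcal{O}(k^{-1/2})$ bound on $\text{MSPBE}_1$ evaluated at the averaged primal iterate, and then convert this into the claimed bounds on the reward-rate and differential-value errors.

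For the saddle-point rate I would follow the template of the finite-sample analysis of GTD2 by \citet{liu2015finite}. The starting point is the standard one-step inequality for projected descent--ascent: for any feasible $(u, \nu)$, the instantaneous duality gap $J_1(u_k, \nu) - J_1(u, \nu_k)$ is dominated by a telescoping Lyapunov difference in $\tfrac{1}{\alpha}\norm{\kappa_k - [\nu^\top, u^\top]^\top}^2$, a term of order $\alpha$ coming from the squared stochastic gradients, and a martingale-difference noise term. Summing over the first $k$ iterations, dividing by $k$, and invoking convexity in $u$ and concavity in $\nu$ (Jensen) to pass from the averaged gap to the gap at the averaged iterate $(\bar u_k, \bar \nu_k)$, the deterministic part is bounded by $\mathcal{O}(\tfrac{1}{\alpha k}) + \mathcal{O}(\alpha)$; choosing the constant stepsize $\alpha \asymp k^{-1/2}$ balances these at $\mathcal{O}(k^{-1/2})$. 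The accumulated noise is a bounded martingale-difference sequence, so Azuma--Hoeffding upgrades the expectation bound to one that holds with high probability. Taking $\nu = \arg\max_\nu J_1(\bar u_k, \nu)$ in the gap and noting that the projection radii can be chosen large enough to contain the (unique, since $A$ is nonsingular) saddle point then yields $\text{MSPBE}_1(\bar u_k) = \norm{A \bar u_k + b}^2_{C^{-1}} = \mathcal{O}(k^{-1/2})$ with high probability.

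The final step converts this MSPBE bound into the two target quantities. Because $A$ is nonsingular and $C^{-1}$ is positive definite, and because $A\bar u_k + b = A(\bar u_k - u_\text{TD})$, the bound $\norm{A\bar u_k + b}^2_{C^{-1}} = \mathcal{O}(k^{-1/2})$ forces $\norm{\bar u_k - u_\text{TD}}^2 = \mathcal{O}(k^{-1/2})$, with $u_\text{TD} = [\hat r^*, w^{*\top}]^\top$ as in \eqref{eq: TD fixed point form 1}; in particular $(\bar{\hat r}_k - \hat r^*)^2$ and $\norm{\bar w_k - w^*}^2$ are each $\mathcal{O}(k^{-1/2})$. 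I would then split each target quantity into estimation error plus approximation error by the triangle inequality, e.g.\ $(\bar{\hat r}_k - r_\pi)^2 \le 2(\bar{\hat r}_k - \hat r^*)^2 + 2(\hat r^* - r_\pi)^2$, bound the first summand by the $\mathcal{O}(k^{-1/2})$ estimation rate just obtained, and bound the second summand by $\mathcal{O}(\inf_{c\in\R}\norm{\Pi_X q_\pi^c - q_\pi^c}_D^2)$ using the deterministic fixed-point quality result of Proposition~\ref{prop:gq1-model-error} (which is where Assumption~\ref{assu:mupi} enters). The identical decomposition applied to $\inf_{c\in\R}\norm{X\bar w_k - q_\pi^c}^2$, together with the matching constants in Proposition~\ref{prop:gq1-model-error} relating $|r_\pi - \hat r^*|$ and $\inf_c\norm{Xw^*-q_\pi^c}_D$ to the same approximation term, gives the first equality in the statement.

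I expect the main obstacle to be the high-probability control of the accumulated gradient noise. Unlike the asymptotic argument behind Theorem~\ref{thm:gq1-convergence}, which only needs $\bar G$ to be Hurwitz and appeals to the ODE method, the finite-sample bound requires explicit non-asymptotic control of the martingale noise, and this is precisely what motivates the three modifications defining Projected Diff-GQ1: the projections keep the iterates and stochastic gradients uniformly bounded so that Azuma--Hoeffding is applicable, the constant stepsize gives the clean $\mathcal{O}(k^{-1/2})$ trade-off, and dropping the regularizer makes the target exactly the unregularized $\text{MSPBE}_1$ minimizer, so that it lines up cleanly with Proposition~\ref{prop:gq1-model-error}. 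A secondary technical point is verifying that the bounded projection sets can be taken large enough to contain the saddle point without biasing the analysis, which again relies on nonsingularity of $A$ to furnish an a priori bound on $\norm{u_\text{TD}}$.
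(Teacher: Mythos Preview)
Your proposal is correct and follows essentially the same route as the paper: invoke the finite-sample analysis of \citet{liu2015finite} for projected primal--dual GTD2 to obtain a high-probability $\mathcal{O}(k^{-1/2})$ bound on $\norm{\bar u_k - u_\text{TD}}^2$, then split each target quantity via the triangle inequality into an estimation term controlled by this rate and an approximation term controlled by Proposition~\ref{prop:gq1-model-error}. The paper packages the first step as a lemma citing \citet{liu2015finite} verbatim rather than re-deriving the one-step inequality, Jensen, and Azuma--Hoeffding as you outline, but the content is the same.
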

We defer the precise statement and its proof to Section~\ref{sec:pgq1}.

\section{Two-Stage Differential Gradient $Q$ Evaluation}
While Assumption~\ref{assu:nonconstant} is not restrictive, we present in this section a new algorithm that does not require it but can still converge to the TD fixed point if it uniquely exists. 
The algorithm achieves this by drawing one more sample from $d_{\mu\pi}$ for each iteration, and performing two learning stages, where $\hat{r}$ converges only when $w$ has converged.
We call this algorithm \emph{Two-Stage Differential Gradient $Q$ Evaluation}, or \emph{Diff-GQ2}, and derive it as follows.


Consider the TD fixed point \eqref{eq: TD fixed point 1}. Writing $\E[\delta_k(w, \hat r)] = 0$ in vector form, we have
\begin{align}
    \hat r = d_\mu^\top (r + P_\pi Xw - Xw). \label{eq: TD fixed point 2 vector}
\end{align}
Replacing $\hat r$ in $\E [\delta_k(w, \hat r)  x_k] = \tb{0}$ with \eqref{eq: TD fixed point 2 vector}, we have:
\begin{align}
    & X^\top D (r + P_\pi Xw - Xw) \\
    & - X^\top D \tb{1} d_\mu^\top (r + P_\pi X w - X w) = \tb{0},
\end{align}
or equivalently
\begin{align} \label{eq: TD fixed point 1 vector}
    A_2 w + b_2 = \tb{0},
\end{align}
where $A_2 \doteq X^\top (D - d_\mu d_\mu^\top)(P_\pi - I)X$, $b_2  \doteq X^\top (D - d_\mu d_\mu^\top) r$.
The combination of \eqref{eq: TD fixed point 2 vector} and \eqref{eq: TD fixed point 1 vector} is an alternative definition for TD fixed points. When $A_2$ is invertible, 
the unique TD fixed points are 
\begin{align} \label{eq: TD fixed point form 2}
    w_\text{TD} &\doteq -A_2^{-1} b_2,\\
    \hat r_\text{TD} &\doteq d_\mu^\top (r + P_\pi Xw_\text{TD} - Xw_\text{TD}).
\end{align}
It is easy to verify that $u_\text{TD} = [\hat r_\text{TD}, w_\text{TD}^\top]^\top$, where $u_\text{TD}$ is defined in \eqref{eq: TD fixed point form 1}.

Denote $\bar{r}_w \doteq r + P_\pi Xw - Xw$, then \eqref{eq: TD fixed point 1 vector} can be written as $X^\top D ( \bar{r}_w - d_\mu^\top \bar{r}_w \tb{1}) = 0$, from which we define a new MSPBE objective:
\begin{align}
\text{MSPBE}_2(w) \doteq \norm{\Pi_X (\bar{r}_w - d_\mu^\top \bar{r}_w \tb{1}))}_D^2,
\end{align}
where $C_2 \doteq X^\top D X$ in $\Pi_X$ is invertible under Assumption \ref{assu: positive dmu}. $\text{MSPBE}_2$ is different from $\text{MSPBE}_1$ defined in \eqref{eq: MSPBE} in that $\text{MSPBE}_2$ is a function of $w$ only while $\text{MSPBE}_1$ is a function of both $w$ and $\hat r$. However, the solutions of MSPBE$_2(w)$ = 0 are exactly the solutions of $w$ in $\text{MSPBE}_1(u)$ = 0, if both solutions exist.

After introducing a ridge term with $\eta > 0$ for the same reason as Diff-GQ1,
we arrive at the objective that Diff-GQ2 minimizes:
\begin{align}
J_{2, \eta}(w) \doteq \norm{X^\top D ( \bar{r}_w - d_\mu^\top \bar{r}_w \tb{1})}_{C^{-1}_2}^2 + \eta \norm{w}^2.
\end{align}

Applying Fenchel's duality on $J_{2, \eta}(w)$ yields 
$\textstyle{\min_w J_{2, \eta}(w) = \min_w \max_\nu J_{2, \eta} (w, \nu)}$,
where 
\begin{align}
&J_{2, \eta}(w, \nu) \\
\doteq& 2 \nu^\top X^\top D (\bar{r}_w - d_\mu^\top \bar{r}_w \tb{1}) - \nu^\top C_2 \nu + \eta \norm{w}^2.
\end{align}
$J_{2, \eta}(w, \nu)$ is convex in $w$ and concave in $\nu$.
To apply primal-dual methods for finding the saddle point of $J_{2, \eta}(w, \nu)$,
we need to obtain unbiased samples of $X^\top D (\bar{r}_w - d_\mu^\top \bar{r}_w \tb{1})$.
As this term includes two nested expectations (i.e., $D$ and $d_\mu$),
Diff-GQ2 requires two i.i.d. samples $(S_{k,1}, A_{k, 1}, R_{k, 1}, S_{k, 1}', A_{k, 1}')$ and $(S_{k,2}, A_{k, 2}, R_{k, 2}, S_{k, 2}', A_{k, 2}')$ from $d_{\mu\pi}$ at the $k$-th iteration for a single gradient update.
This is not the notorious double sampling issue in minimizing the Mean Square Bellman Error (see, e.g., \citet{baird1995residual} and Section 11.5 by \citet{sutton2018reinforcement}),
where two successor states $s_1'$ and $s_2'$ from a single state action pair $(s, a)$ are required,
which is not possible in the function approximation setting. 
Sampling two i.i.d. tuples from $d_{\mu\pi}$ is completely feasible.

At the $k$-th iteration, Diff-GQ2 updates $\nu$ and $w$ as
\begin{align}
\label{eq:gq2-update-uw}
\nu_{k+1} &\doteq \nu_k + \alpha_k \Big(R_{k, 1} + x_{k, 1}'^\top w_k - x_{k, 1}^\top w_k \\
&\quad - (R_{k, 2} + x_{k, 2}'^\top w_k - x_{k, 2}^\top w_k) - x_{k, 1}^\top \nu_t \Big) x_{k, 1}, \\
w_{k+1} &\doteq w_k + \alpha_k \Big(x_{k, 1} - x_{k, 1}' + (x_{k, 2} - x_{k, 2}') \Big) x_{k, 1}^\top \nu_k \\
&\quad - \alpha_k \eta w_k,
\end{align}
where $x_{k, i} \doteq x(S_{k, i}, A_{k, i}), x_{k, i}' \doteq x(S_{k, i}', A_{k, i}'), \qty{\alpha_k}$, again, satisfies Assumption \ref{assu: stepsize}.
Additionally,
following~\eqref{eq: TD fixed point 2 vector}, Diff-GQ2 updates $\hat{r}$ as
\begin{align}
\hat{r}_{k+1} \doteq \hat{r}_k + \beta_k \Big(& \textstyle{\frac{1}{2}\sum_{i=1}^2} (R_{k, i} + x_{k, i}'^\top w_k - x_{k, i}^\top w_k) \\&- \hat{r}_k \Big),
\label{eq:gq2-update-r}
\end{align}
where $\qty{\beta_k}$ satisfies the same assumption as $\{\alpha_k\}$.
\begin{assumption} \label{assu: stepsize 2}
$\{\beta_k\}$ is a positive deterministic nonincreasing sequence s.t. $\sum_k \beta_k = \infty$ and $\sum_k \beta_k^2 < \infty$.
\end{assumption}

\begin{theorem}
\label{thm:gq2-convergence}
If Assumptions~\ref{assu:chain}, \ref{assu: positive dmu}, \ref{assu:linearly independent}, \ref{assu: stepsize}, \& \ref{assu: stepsize 2} hold, then
almost surely,
the iterates $\qty{w_k}, \qty{\hat{r}_k}$ generated by Diff-GQ2~\eqref{eq:gq2-update-uw} \&~\eqref{eq:gq2-update-r} satisfy
\begin{align}
{\lim_{k \to \infty}} w_k &= w^*_\eta,
{\lim_{k \to \infty}} \hat{r}_k &= d_\mu^\top (r + P_\pi X w^*_\eta - X w^*_\eta),
\end{align}
where $w^*_\eta \doteq -(\eta I + A_2^\top C_2^{-1} A_2)^{-1}A_2^\top C_2^{-1}b_2$ is the unique minimizer of $J_{2, \eta}(w)$. 
Define
$w^*_0 \doteq \lim_{\eta \downarrow 0} w^*_\eta $,
we have
\begin{align}
\textstyle \norm{w^*_\eta - w^*_0} \leq \eta U_0
\end{align}
for some constant $U_0$.
Further, if Assumption~\ref{assu: existence of TD fixed points} holds, then
$A_2 w^*_0 + b_2 = 0$,
and if $A_2$ is invertible, then for $\eta = 0$, $w_k$ and $\hat r_k$ converge almost surely to $w_\text{TD}$ and $\hat r_\text{TD}$ defined in \eqref{eq: TD fixed point form 2}.
\end{theorem}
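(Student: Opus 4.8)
The plan is to mirror the Diff-GQ1 analysis, exploiting the structural feature that the $(\nu,w)$ updates in \eqref{eq:gq2-update-uw} do not involve $\hat r$, while the $\hat r$ update \eqref{eq:gq2-update-r} depends only on $w$. I would therefore first establish convergence of the autonomous $(\nu,w)$ subsystem and afterwards treat \eqref{eq:gq2-update-r} as a tracking recursion. Setting $\kappa_k \doteq [\nu_k^\top, w_k^\top]^\top$, I would rewrite \eqref{eq:gq2-update-uw} as a linear stochastic approximation $\kappa_{k+1} = \kappa_k + \alpha_k(G_{k+1}\kappa_k + h_{k+1})$ and, using the i.i.d.\ sampling of the two tuples and the identity $X^\top D(\bar r_w - d_\mu^\top \bar r_w \tb 1) = A_2 w + b_2$, compute
\begin{align}
\bar G = \mqty[-C_2 & A_2 \\ -A_2^\top & -\eta I], \quad \bar h = \mqty[b_2 \\ \tb 0].
\end{align}
After verifying the standard regularity conditions, the results of \citet{borkar2009stochastic} together with Assumption~\ref{assu: stepsize} reduce everything to showing that $\bar G$ is Hurwitz, in which case $\kappa_k \to -\bar G^{-1}\bar h$ almost surely; reading off the lower block and solving the two block equations gives $w^*_\eta = -(\eta I + A_2^\top C_2^{-1}A_2)^{-1}A_2^\top C_2^{-1}b_2$, and the first-order optimality condition for the convex objective $J_{2,\eta}(w) = \norm{A_2 w + b_2}_{C_2^{-1}}^2 + \eta\norm{w}^2$ identifies it as the unique minimizer.

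The main obstacle is the Hurwitz property of $\bar G$. Here I would first note that Assumptions~\ref{assu: positive dmu} and~\ref{assu:linearly independent} make $C_2 = X^\top D X$ positive definite (this is exactly where Diff-GQ2 can replace Assumption~\ref{assu:nonconstant} by Assumption~\ref{assu:linearly independent}). For any eigenvalue $\lambda$ of $\bar G$ with unit eigenvector $[p^\top, q^\top]^\top$, a direct computation gives $\mathrm{Re}(\lambda) = -p^* C_2 p - \eta\norm{q}^2$, because the cross terms satisfy $p^* A_2 q - q^* A_2^\top p = 2i\,\mathrm{Im}(p^* A_2 q)$ and are hence purely imaginary. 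When $\eta > 0$ this is strictly negative, since a vanishing real part would force $p=0$ and then $q=0$; when $\eta = 0$, a vanishing real part forces $p=0$ and therefore $A_2 q = 0$, so invertibility of $A_2$ again yields $q=0$. In both cases $\bar G$ has spectrum in the open left half-plane and is nonsingular.

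With $w_k \to w^*_\eta$ established, I would turn to \eqref{eq:gq2-update-r}. Its conditional expected increment drives $\hat r_k$ toward the continuous affine target $g(w_k) \doteq d_\mu^\top(r + P_\pi X w_k - X w_k)$, so writing the recursion as $\hat r_{k+1} - g(w_k) = (1-\beta_k)(\hat r_k - g(w_k)) + \beta_k \zeta_{k+1}$ with $\zeta_{k+1}$ a martingale-difference noise of bounded conditional variance (bounded because $w_k$ converges and is thus almost surely bounded), and using $g(w_k)\to g(w^*_\eta)$ together with Assumption~\ref{assu: stepsize 2}, a standard argument for stochastic approximation tracking a convergent target yields $\hat r_k \to g(w^*_\eta) = d_\mu^\top(r + P_\pi X w^*_\eta - X w^*_\eta)$ almost surely.

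Finally, for the $\eta\downarrow 0$ statements I would analyze $w^*_\eta = -(\eta I + M)^{-1}v$ with the symmetric PSD matrix $M \doteq A_2^\top C_2^{-1}A_2$ and $v \doteq A_2^\top C_2^{-1}b_2$ through the spectral decomposition of $M$. Since $q \in \ker A_2$ implies $q^\top v = (A_2 q)^\top C_2^{-1}b_2 = 0$ and $\ker M = \ker A_2$, we always have $v \perp \ker M$, so the limit $w^*_0 = -M^{+}v$ exists; comparing $(\eta I+M)^{-1}$ and $M^{+}$ eigenspace by eigenspace and bounding $\tfrac{1}{\eta+\lambda_i}-\tfrac{1}{\lambda_i}$ over the positive eigenvalues $\lambda_i$ of $M$ gives $\norm{w^*_\eta - w^*_0}\le \eta U_0$ for a constant $U_0$. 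If Assumption~\ref{assu: existence of TD fixed points} holds there is some $\tilde w$ with $A_2\tilde w + b_2 = 0$, whence $v = -M\tilde w$ and $A_2 w^*_0 = A_2 M^{+}M\tilde w = A_2\tilde w = -b_2$, because $M^{+}M$ is the orthogonal projection onto $\mathrm{range}(A_2^\top)$ and annihilates $\ker A_2$. If moreover $A_2$ is invertible, then $M$ is invertible, $w^*_0 = -A_2^{-1}b_2 = w_\text{TD}$, and the tracking limit of $\hat r_k$ equals $\hat r_\text{TD}$, settling the $\eta = 0$ case.
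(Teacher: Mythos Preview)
Your proposal is correct and follows essentially the same route as the paper: cast \eqref{eq:gq2-update-uw} as a linear stochastic approximation with mean matrix $\bar G = \smqty[-C_2 & A_2 \\ -A_2^\top & -\eta I]$, verify the Hurwitz property via $\Re(\lambda)=-p^*C_2p-\eta\|q\|^2$, then treat \eqref{eq:gq2-update-r} as a tracking recursion with an $o(1)$ drift once $w_k\to w^*_\eta$. For the $\eta\downarrow 0$ statements the paper works with the SVD of $C_2^{-1/2}A_2$ and its Moore--Penrose pseudoinverse (yielding $w^*_0=-(C_2^{-1/2}A_2)^\dagger C_2^{-1/2}b_2$ and the bound $\eta/\sigma^3\cdot\|C_2^{-1/2}b_2\|$), whereas you use the eigendecomposition of $M=A_2^\top C_2^{-1}A_2$; since $M=B^\top B$ for $B=C_2^{-1/2}A_2$ and $(B^\top B)^+B^\top=B^\dagger$, these are the same computation in different coordinates.
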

We defer the full proof to Section~\ref{sec:proof-gq2}. Similar to Projected Diff-GQ1,
we provide a finite sample analysis for a variant of Diff-GQ2,
\emph{Projected Diff-GQ2},
in Section~\ref{sec:pgq2}.

\begin{figure*}[t]
\centering
\includegraphics[width=0.8\textwidth]{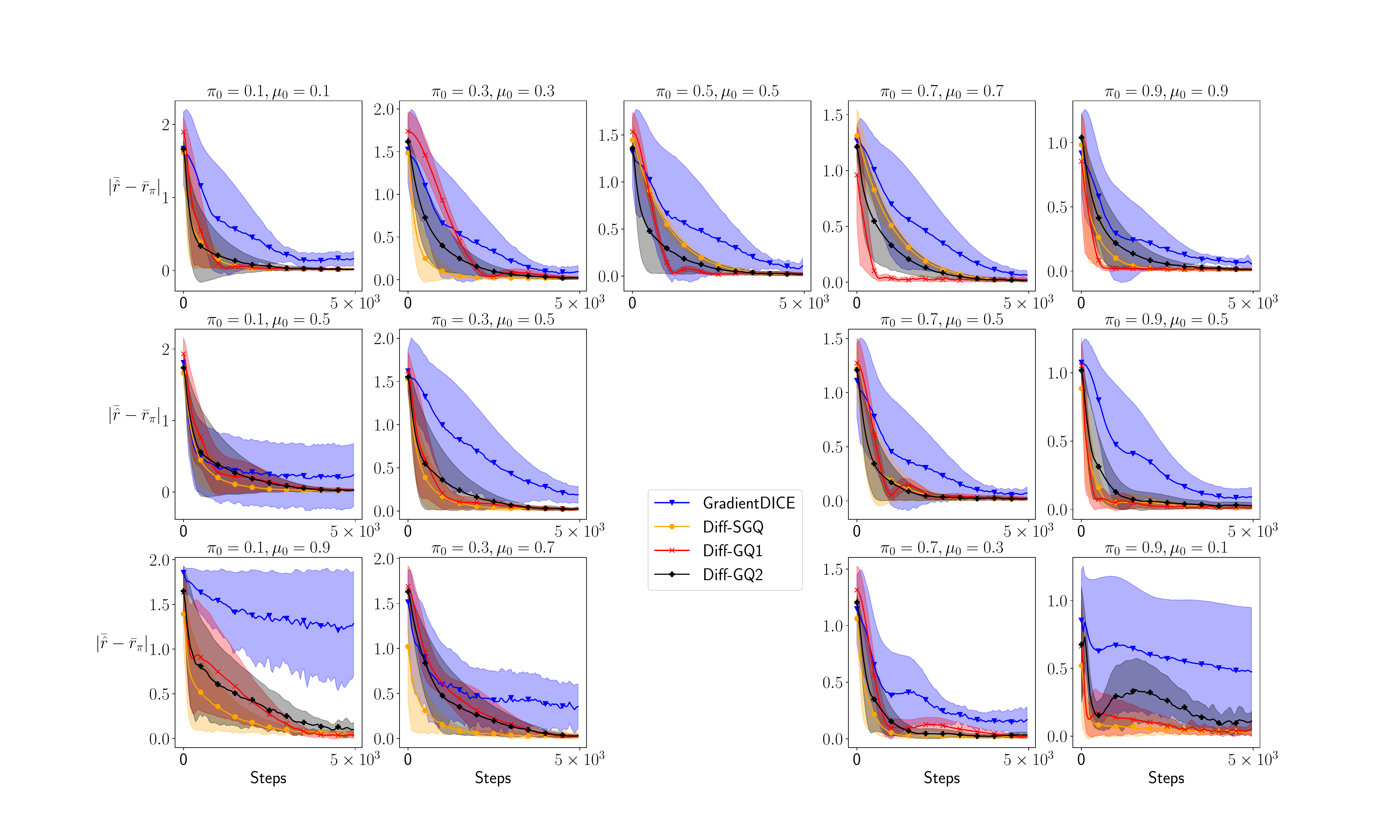}
\caption{\label{fig:boyans_chain_result}
Boyan's chain with linear function approximation. 
We vary $\pi_0$ in $\qty{0.1, 0.3, 0.5, 0.7, 0.9}$.
In the first row, we use $\mu_0 = \pi_0$;
in the second row, we use $\mu_0 = 0.5$;
in the third row, we use $\mu_0 = 1 - \pi_0$.
$\bar{\hat{r}}$ is the average $\hat{r}$ of recent 100 steps.
}
\end{figure*}

\begin{figure}[t]
\centering
\includegraphics[width=0.8\linewidth]{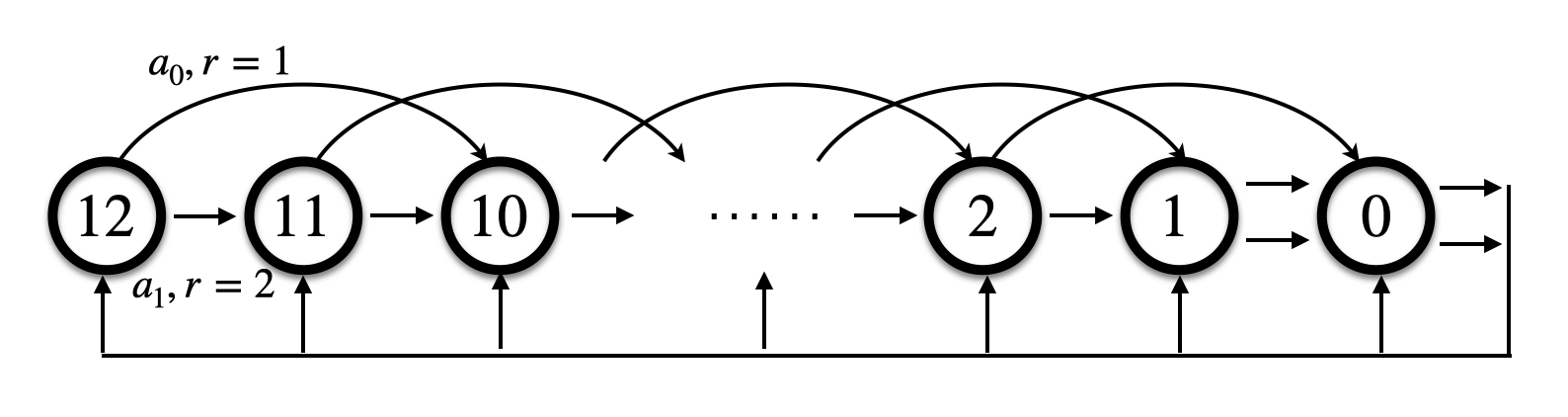}
\caption{\label{fig:boyans_chain}
A variant of Boyan's chain for policy evaluation in the average-reward setting.
There are 13 states $\qty{s_0, \dots, s_{12}}$ and two actions $\qty{a_0, a_1}$ in the chain.
For any $i \in \qty{0, \dots, 12}$, $r(s_i, a_0) = 1$ and $r(s_i, a_1) = 2$.
For any $i \geq 2$, $p(s_{i-2}|s_i, a_0) = 1$ and $p(s_{i-1}|s_i, a_1) = 1$.
At $s_1$, both actions lead to $s_0$.
At $s_0$, both actions lead to a random state in $\qty{s_0, \dots, s_{12}}$ with equal probability.
}
\end{figure}

\begin{figure*}[h]
\centering
\includegraphics[width=0.8\textwidth]{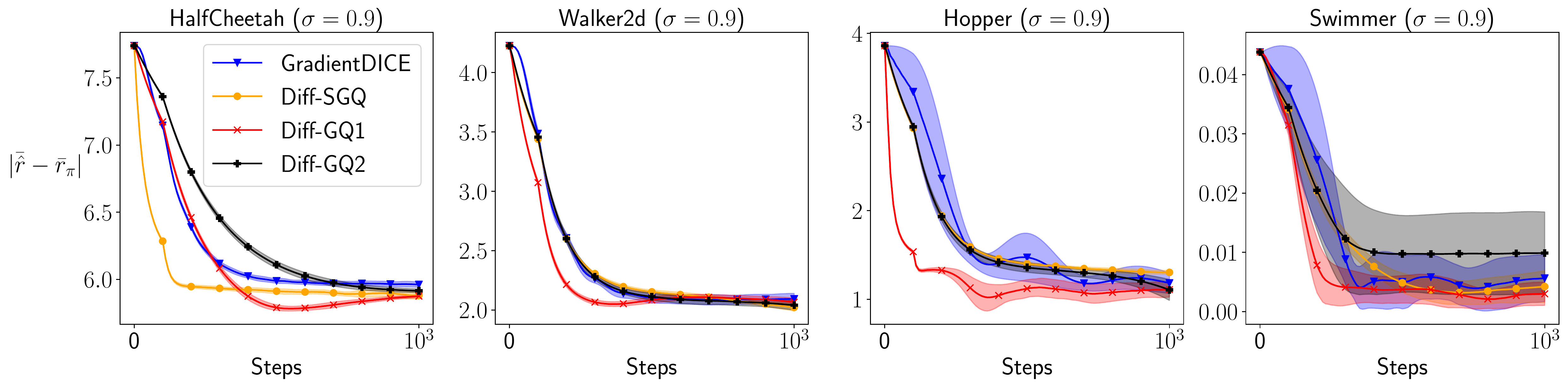}
\caption{\label{fig:mujoco_result} MuJoCo tasks with with neural network function approximation.
$\bar{\hat{r}}$ is the average $\hat{r}$ of recent 100 steps.
} 
\end{figure*}

\section{Experiments}
In light of the reproducibility challenge in RL research \citep{henderson2017deep},
we perform a grid search with 30 independent runs for hyperparameter tuning in all our experiments.
Each curve corresponds to the best hyperparameters minimizing the error of the reward rate prediction at the end of training and is averaged over 30 independent runs with the shaded region indicating one standard deviation.
To the best of our knowledge,
GradientDICE is the only density-ratio-based approach for off-policy policy evaluation in average-reward MDPs that is provably convergent with general linear function approximation and has $\mathcal{O}(K)$ computational complexity per step.
We, therefore, use GradientDICE as a baseline.
See Section~\ref{sec:gradientdice} for more details about GradientDICE.
All the implementations are publicly available.~\footnote{\url{https://github.com/ShangtongZhang/DeepRL}} 

\textbf{Linear Function Approximation.}
We benchmark Diff-SGQ, 
Diff-GQ1, Diff-GQ2, and GradientDICE in a variant of Boyan's chain \citep{boyan1999least},
which is the same as the chain used in \citet{zhang2020gradientdice}
except that we introduce a nonzero reward for each action
for the purpose of policy evaluation.
The chain is illustrated in Figure~\ref{fig:boyans_chain}.
We consider target policies of the form $\pi(a_0 | s_i) = \pi_0$ for all $s_i$,
where $\pi_0 \in [0, 1]$ is some constant.
The sampling distribution we consider has the form $d_\mu(s_i, a_0) = \frac{\mu_0}{13}$ and $d_\mu(s_i, a_0) = \frac{1 - \mu_0}{13}$
for all $s_i$,
where $\mu_0 \in [0, 1]$ is some constant.
Even if $\mu_0 = \pi_0$, 
the problem is still off-policy.
We consider linear function approximation and use the same state features as \citet{boyan1999least},
which are detailed in Section~\ref{sec:impl}.
We use an one-hot encoding for actions.
Concatenating the state feature and the one-hot action feature yields the state-action feature we use in the experiments. 

We use constant learning rates $\alpha$ for all compared algorithms,
which is tuned in $\qty{2^{-20}, 2^{-19}, \dots, 2^{-1}}$.
For Diff-GQ1 and Diff-GQ2,
besides tuning $\alpha$ in the same way as Diff-SGQ,
we tune $\eta$ in $\qty{0, 0.01, 0.1}$.
For GradientDICE,
besides tuning $(\alpha, \eta)$ in the same way as Diff-GQ1, 
we tune $\lambda$,
the weight for a normalizing term,
in $\qty{0, 0.1, 1, 10}$.

We run each algorithm for $5 \times 10^3$ steps.
Diff-GQ2 updates are applied every two steps as one Diff-GQ2 update requires two samples.
The results in Figure~\ref{fig:boyans_chain_result} suggest that the three differential-value-based algorithms proposed in this paper consistently outperform the density-ratio-based algorithm GradientDICE in the tested domain.

\textbf{Nonlinear Function Approximation.}
The value-based off-policy policy evaluation algorithms proposed in this paper can be easily combined with neural network function approximators.
For Diff-SGQ, 
we use a target network \citep{mnih2015human} to stabilize the training of neural networks.
For Diff-GQ1 and Diff-GQ2, 
we introduce neural network function approximators in the saddle-point objectives (i.e., $J_{1, \eta}(u, \nu)$ and $J_{2, \eta}(w, \nu)$) directly, 
similar to \citet{zhang2020gradientdice} in GradientDICE.
The details are provided Sections~\ref{sec:neural-fqe},~\ref{sec:neural-gq1}, and~\ref{sec:neural-gq2}. 

We benchmark Diff-SGQ, Diff-GQ1, Diff-GQ2, and GradientDICE in several MuJoCo domains.
To this end, 
we first train a deterministic target policy $\pi_0$ with TD3 \citep{fujimoto2018addressing}.
The behavior policy $\mu_0$ is composed by introducing Gaussian noise to $\pi_0$, i.e., $\mu_0(a | s) \doteq \mathcal{N}(\pi_0(s), \sigma^2 I)$.
The ground truth reward rate of $\pi_0$ is computed with Monte Carlo methods by running $\pi_0$ for $10^6$ steps.
We vary $\sigma$ from $\qty{0.1, 0.5, 0.9}$.
More details are provided in Section~\ref{sec:impl}.
For Differential FQE, Diff-GQ1, and Diff-GQ2,
we tune the learning rate from $\qty{0.1, 0.05, 0.01, 0.005, 0.001}$.
For GradientDICE,
we additionally tune $\lambda$ from $\qty{0.1, 1, 10}$.
The results with $\sigma = 0.9$  are reported in Figure~\ref{fig:mujoco_result},
where Diff-GQ1 consistently performs the best. 
The results with $\sigma = 0.1$ and $\sigma = 0.5$ are deferred to Section~\ref{sec:mujoco-expt},
where Diff-GQ1 consistently performs the best as well.

\section{Related Work and Discussion}

In this paper,
we addressed the policy evaluation problem with function approximation in the model-free setting.
If the model is given or learned by the agent, such a problem could be solved by, for example, classic approximate dynamic programming approaches \citep{powell2007approximate}, search algorithms \citep{russell2002artificial}, and other optimal control algorithms \citep{kirk2004optimal}.
For more discussion about learning a model, see, for example, \citet{sutton1990integrated,sutton2012dyna,liu2018representation,chua2018deep,wan2019planning,gottesman2019combining,yu2020mopo,kidambi2020morel}.

The on-policy average-reward policy evaluation problem was studied by \citet{tsitsiklis1999average}, which proposed and solved a Projected Bellman Equation (PBE).
The reward rate in PBE is a known quantity, which is trivial to estimate in the on-policy case.
The reward rate, however, cannot be obtained easily in the off-policy case and needs to be estimated cleverly. Such a challenge is resolved in our work by optimizing a novel objective, $\text{MSPBE}_1$, which has the reward rate estimate as a free variable to optimize.
Moreover, by proposing the other novel objective $\text{MSPBE}_2$, we showed that the reward rate or its direct estimate does not even have to appear in an objective. In fact, for the uniqueness of the solution, our algorithms did not optimize $\text{MSPBE}_1$ and MSPBE$_2$, but optimized a regularized version of these objectives, where the weight of the regularization term can be arbitrarily small.
Introducing a regularization term in MSPBE-like objectives is not new though; see, for example, \citet{mahadevan2014proximal,yu2017convergence,du2017stochastic,zhang2019provably,zhang2020gradientdice}.
One could,
of course,
apply regularization to Diff-SGQ directly,
similar to \citet{diddigi2019convergent} in the discounted off-policy linear TD.
Unfortunately, 
the weight for their regularization term has to be sufficiently large to ensure convergence.

Fenchel's duality, which we used in the derivation of our algorithms, is not new in RL research. For example, it has been applied to cope with the double sampling problem (see, e.g., \citet{liu2015finite,macua2014distributed,dai2017sbeed,liu2018block,nachum2019dualdice,nachum2019algaedice,zhang2020gendice,zhang2020gradientdice}) or to construct novel policy iteration frameworks \citep{zhang2020per}.

\section{Conclusion}
In this paper, we provided the first study of the off-policy policy evaluation problem (estimating both reward rate and differential value function) in the function approximation, average-reward setting. Such a problem encapsules the existing off-policy evaluation problem (estimating only the reward rate;
see, e.g., \citet{li2019perspective}).
To this end, we proposed two novel MSPBE objectives and derived two algorithms optimizing regularized versions of these objectives.
The proposed algorithms are the first provably convergent algorithms for estimating the differential value function and are also the first provably convergent algorithms for estimating the reward rate without estimating density ratio in this setting.
In terms of estimating the reward rate,
though our goal is not to achieve new state of the art,
our empirical results confirmed that the proposed value-based methods consistently outperform a competitive density-ratio-based method in tested domains.
We conjecture that this performance advantage results from the flexibility of value-based methods,
that is, for any $c$, $q_\pi + c \tb{1}$ is a feasible learning target.
By contrast, the density ratio $\frac{d_\pi(s, a)}{d_\mu(s, a)}$ is unique. 
Overall, 
our empirical study suggests that value-based methods deserve more attention in future research on off-policy evaluation in average-reward MDPs. 

\section*{Acknowledgments}
SZ is generously funded by the Engineering and Physical Sciences Research Council (EPSRC). This project has received funding from the European Research Council under the European Union's Horizon 2020 research and innovation programme (grant agreement number 637713). The experiments were made possible by a generous equipment grant from NVIDIA. 


\bibliography{ref.bib}

\begin{thebibliography}{58}
\providecommand{\natexlab}[1]{#1}
\providecommand{\url}[1]{\texttt{#1}}
\expandafter\ifx\csname urlstyle\endcsname\relax
  \providecommand{\doi}[1]{doi: #1}\else
  \providecommand{\doi}{doi: \begingroup \urlstyle{rm}\Url}\fi

\bibitem[Abbasi-Yadkori et~al.(2019)Abbasi-Yadkori, Bartlett, Bhatia, Lazic,
  Szepesvari, and Weisz]{abbasi2019politex}
Abbasi-Yadkori, Y., Bartlett, P., Bhatia, K., Lazic, N., Szepesvari, C., and
  Weisz, G.
\newblock Politex: Regret bounds for policy iteration using expert prediction.
\newblock In \emph{International Conference on Machine Learning}, 2019.

\bibitem[Baird(1995)]{baird1995residual}
Baird, L.
\newblock Residual algorithms: Reinforcement learning with function
  approximation.
\newblock \emph{Machine Learning}, 1995.

\bibitem[Borkar(2009)]{borkar2009stochastic}
Borkar, V.~S.
\newblock \emph{Stochastic approximation: a dynamical systems viewpoint}.
\newblock Springer, 2009.

\bibitem[Boyan(1999)]{boyan1999least}
Boyan, J.~A.
\newblock Least-squares temporal difference learning.
\newblock In \emph{Proceedings of the 16th International Conference on Machine
  Learning}, 1999.

\bibitem[Chua et~al.(2018)Chua, Calandra, McAllister, and Levine]{chua2018deep}
Chua, K., Calandra, R., McAllister, R., and Levine, S.
\newblock Deep reinforcement learning in a handful of trials using
  probabilistic dynamics models.
\newblock In \emph{Advances in Neural Information Processing Systems}, 2018.

\bibitem[Dai et~al.(2017)Dai, Shaw, Li, Xiao, He, Liu, Chen, and
  Song]{dai2017sbeed}
Dai, B., Shaw, A., Li, L., Xiao, L., He, N., Liu, Z., Chen, J., and Song, L.
\newblock Sbeed: Convergent reinforcement learning with nonlinear function
  approximation.
\newblock \emph{arXiv preprint arXiv:1712.10285}, 2017.

\bibitem[Diddigi et~al.(2019)Diddigi, Kamanchi, and
  Bhatnagar]{diddigi2019convergent}
Diddigi, R.~B., Kamanchi, C., and Bhatnagar, S.
\newblock A convergent off-policy temporal difference algorithm.
\newblock \emph{arXiv preprint arXiv:1911.05697}, 2019.

\bibitem[Du et~al.(2017)Du, Chen, Li, Xiao, and Zhou]{du2017stochastic}
Du, S.~S., Chen, J., Li, L., Xiao, L., and Zhou, D.
\newblock Stochastic variance reduction methods for policy evaluation.
\newblock In \emph{Proceedings of the 34th International Conference on Machine
  Learning}, 2017.

\bibitem[Dulac-Arnold et~al.(2019)Dulac-Arnold, Mankowitz, and
  Hester]{dulac2019challenges}
Dulac-Arnold, G., Mankowitz, D., and Hester, T.
\newblock Challenges of real-world reinforcement learning.
\newblock \emph{arXiv preprint arXiv:1904.12901}, 2019.

\bibitem[Fujimoto et~al.(2018)Fujimoto, van Hoof, and
  Meger]{fujimoto2018addressing}
Fujimoto, S., van Hoof, H., and Meger, D.
\newblock Addressing function approximation error in actor-critic methods.
\newblock \emph{arXiv preprint arXiv:1802.09477}, 2018.

\bibitem[Gelada \& Bellemare(2019)Gelada and Bellemare]{gelada2019off}
Gelada, C. and Bellemare, M.~G.
\newblock Off-policy deep reinforcement learning by bootstrapping the covariate
  shift.
\newblock In \emph{Proceedings of the 33rd AAAI Conference on Artificial
  Intelligence}, 2019.

\bibitem[Gottesman et~al.(2019)Gottesman, Liu, Sussex, Brunskill, and
  Doshi-Velez]{gottesman2019combining}
Gottesman, O., Liu, Y., Sussex, S., Brunskill, E., and Doshi-Velez, F.
\newblock Combining parametric and nonparametric models for off-policy
  evaluation.
\newblock \emph{arXiv preprint arXiv:1905.05787}, 2019.

\bibitem[Hallak \& Mannor(2017)Hallak and Mannor]{hallak2017consistent}
Hallak, A. and Mannor, S.
\newblock Consistent on-line off-policy evaluation.
\newblock In \emph{Proceedings of the 34th International Conference on Machine
  Learning}, 2017.

\bibitem[Henderson et~al.(2017)Henderson, Islam, Bachman, Pineau, Precup, and
  Meger]{henderson2017deep}
Henderson, P., Islam, R., Bachman, P., Pineau, J., Precup, D., and Meger, D.
\newblock Deep reinforcement learning that matters.
\newblock \emph{arXiv preprint arXiv:1709.06560}, 2017.

\bibitem[Howard(1960)]{howard1960dynamic}
Howard, R.~A.
\newblock Dynamic programming and markov processes.
\newblock 1960.

\bibitem[Jiang \& Li(2015)Jiang and Li]{jiang2015doubly}
Jiang, N. and Li, L.
\newblock Doubly robust off-policy value evaluation for reinforcement learning.
\newblock \emph{arXiv preprint arXiv:1511.03722}, 2015.

\bibitem[Kidambi et~al.(2020)Kidambi, Rajeswaran, Netrapalli, and
  Joachims]{kidambi2020morel}
Kidambi, R., Rajeswaran, A., Netrapalli, P., and Joachims, T.
\newblock Morel: Model-based offline reinforcement learning.
\newblock \emph{arXiv preprint arXiv:2005.05951}, 2020.

\bibitem[Kirk(2004)]{kirk2004optimal}
Kirk, D.~E.
\newblock \emph{Optimal control theory: an introduction}.
\newblock Courier Corporation, 2004.

\bibitem[Kolter(2011)]{kolter2011fixed}
Kolter, J.~Z.
\newblock The fixed points of off-policy td.
\newblock In \emph{Advances in Neural Information Processing Systems}, 2011.

\bibitem[Konda(2002)]{konda2002thesis}
Konda, V.~R.
\newblock \emph{Actor-critic algorithms}.
\newblock PhD thesis, Massachusetts Institute of Technology, 2002.

\bibitem[Lazic et~al.(2020)Lazic, Yin, Farajtabar, Levine, Gorur, Harris, and
  Schuurmans]{lazic2020maximum}
Lazic, N., Yin, D., Farajtabar, M., Levine, N., Gorur, D., Harris, C., and
  Schuurmans, D.
\newblock A maximum-entropy approach to off-policy evaluation in average-reward
  mdps.
\newblock \emph{Advances in Neural Information Processing Systems}, 2020.

\bibitem[Li(2019)]{li2019perspective}
Li, L.
\newblock A perspective on off-policy evaluation in reinforcement learning.
\newblock \emph{Frontiers of Computer Science}, 2019.

\bibitem[Liu et~al.(2015)Liu, Liu, Ghavamzadeh, Mahadevan, and
  Petrik]{liu2015finite}
Liu, B., Liu, J., Ghavamzadeh, M., Mahadevan, S., and Petrik, M.
\newblock Finite-sample analysis of proximal gradient td algorithms.
\newblock In \emph{Proceedings of the 31st Conference on Uncertainty in
  Artificial Intelligence}, 2015.

\bibitem[Liu et~al.(2018{\natexlab{a}})Liu, Li, Tang, and
  Zhou]{liu2018breaking}
Liu, Q., Li, L., Tang, Z., and Zhou, D.
\newblock Breaking the curse of horizon: Infinite-horizon off-policy
  estimation.
\newblock In \emph{Advances in Neural Information Processing Systems},
  2018{\natexlab{a}}.

\bibitem[Liu et~al.(2018{\natexlab{b}})Liu, Gottesman, Raghu, Komorowski,
  Faisal, Doshi-Velez, and Brunskill]{liu2018representation}
Liu, Y., Gottesman, O., Raghu, A., Komorowski, M., Faisal, A.~A., Doshi-Velez,
  F., and Brunskill, E.
\newblock Representation balancing mdps for off-policy policy evaluation.
\newblock \emph{Advances in Neural Information Processing Systems},
  2018{\natexlab{b}}.

\bibitem[Liu et~al.(2019)Liu, Swaminathan, Agarwal, and Brunskill]{liu2019off}
Liu, Y., Swaminathan, A., Agarwal, A., and Brunskill, E.
\newblock Off-policy policy gradient with state distribution correction.
\newblock \emph{arXiv preprint arXiv:1904.08473}, 2019.

\bibitem[Macua et~al.(2014)Macua, Chen, Zazo, and Sayed]{macua2014distributed}
Macua, S.~V., Chen, J., Zazo, S., and Sayed, A.~H.
\newblock Distributed policy evaluation under multiple behavior strategies.
\newblock \emph{IEEE Transactions on Automatic Control}, 2014.

\bibitem[Mahadevan et~al.(2014)Mahadevan, Liu, Thomas, Dabney, Giguere, Jacek,
  Gemp, and Liu]{mahadevan2014proximal}
Mahadevan, S., Liu, B., Thomas, P., Dabney, W., Giguere, S., Jacek, N., Gemp,
  I., and Liu, J.
\newblock Proximal reinforcement learning: A new theory of sequential decision
  making in primal-dual spaces.
\newblock \emph{arXiv preprint arXiv:1405.6757}, 2014.

\bibitem[Mnih et~al.(2015)Mnih, Kavukcuoglu, Silver, Rusu, Veness, Bellemare,
  Graves, Riedmiller, Fidjeland, Ostrovski, et~al.]{mnih2015human}
Mnih, V., Kavukcuoglu, K., Silver, D., Rusu, A.~A., Veness, J., Bellemare,
  M.~G., Graves, A., Riedmiller, M., Fidjeland, A.~K., Ostrovski, G., et~al.
\newblock Human-level control through deep reinforcement learning.
\newblock \emph{Nature}, 2015.

\bibitem[Mousavi et~al.(2020)Mousavi, Li, Liu, and Zhou]{mousavi2020blackbox}
Mousavi, A., Li, L., Liu, Q., and Zhou, D.
\newblock Black-box off-policy estimation for infinite-horizon reinforcement
  learning.
\newblock In \emph{International Conference on Learning Representations}, 2020.

\bibitem[Nachum et~al.(2019{\natexlab{a}})Nachum, Chow, Dai, and
  Li]{nachum2019dualdice}
Nachum, O., Chow, Y., Dai, B., and Li, L.
\newblock Dualdice: Behavior-agnostic estimation of discounted stationary
  distribution corrections.
\newblock \emph{arXiv preprint arXiv:1906.04733}, 2019{\natexlab{a}}.

\bibitem[Nachum et~al.(2019{\natexlab{b}})Nachum, Dai, Kostrikov, Chow, Li, and
  Schuurmans]{nachum2019algaedice}
Nachum, O., Dai, B., Kostrikov, I., Chow, Y., Li, L., and Schuurmans, D.
\newblock Algaedice: Policy gradient from arbitrary experience.
\newblock \emph{arXiv preprint arXiv:1912.02074}, 2019{\natexlab{b}}.

\bibitem[Nair \& Hinton(2010)Nair and Hinton]{nair2010rectified}
Nair, V. and Hinton, G.~E.
\newblock Rectified linear units improve restricted boltzmann machines.
\newblock In \emph{Proceedings of the 27th International Conference on Machine
  Learning}, 2010.

\bibitem[Powell(2007)]{powell2007approximate}
Powell, W.~B.
\newblock \emph{Approximate Dynamic Programming: Solving the curses of
  dimensionality}, volume 703.
\newblock John Wiley \& Sons, 2007.

\bibitem[Puterman(1994)]{puterman2014markov}
Puterman, M.~L.
\newblock \emph{Markov decision processes: discrete stochastic dynamic
  programming}.
\newblock John Wiley \& Sons, 1994.

\bibitem[Russell \& Norvig(2002)Russell and Norvig]{russell2002artificial}
Russell, S. and Norvig, P.
\newblock Artificial intelligence: a modern approach.
\newblock 2002.

\bibitem[Silver et~al.(2016)Silver, Huang, Maddison, Guez, Sifre, Van
  Den~Driessche, Schrittwieser, Antonoglou, Panneershelvam, Lanctot,
  et~al.]{silver2016mastering}
Silver, D., Huang, A., Maddison, C.~J., Guez, A., Sifre, L., Van Den~Driessche,
  G., Schrittwieser, J., Antonoglou, I., Panneershelvam, V., Lanctot, M.,
  et~al.
\newblock Mastering the game of go with deep neural networks and tree search.
\newblock \emph{Nature}, 2016.

\bibitem[Sutton(1990)]{sutton1990integrated}
Sutton, R.~S.
\newblock Integrated architectures for learning, planning, and reacting based
  on approximating dynamic programming.
\newblock In \emph{Proceedings of the 7th International Conference on Machine
  Learning}, 1990.

\bibitem[Sutton \& Barto(2018)Sutton and Barto]{sutton2018reinforcement}
Sutton, R.~S. and Barto, A.~G.
\newblock \emph{Reinforcement Learning: An Introduction (2nd Edition)}.
\newblock MIT press, 2018.

\bibitem[Sutton et~al.(2009{\natexlab{a}})Sutton, Maei, Precup, Bhatnagar,
  Silver, Szepesv{\'a}ri, and Wiewiora]{sutton2009fast}
Sutton, R.~S., Maei, H.~R., Precup, D., Bhatnagar, S., Silver, D.,
  Szepesv{\'a}ri, C., and Wiewiora, E.
\newblock Fast gradient-descent methods for temporal-difference learning with
  linear function approximation.
\newblock In \emph{Proceedings of the 26th International Conference on Machine
  Learning}, 2009{\natexlab{a}}.

\bibitem[Sutton et~al.(2009{\natexlab{b}})Sutton, Maei, and
  Szepesv{\'a}ri]{sutton2009convergent}
Sutton, R.~S., Maei, H.~R., and Szepesv{\'a}ri, C.
\newblock A convergent $ o (n) $ temporal-difference algorithm for off-policy
  learning with linear function approximation.
\newblock In \emph{Advances in Neural Information Processing Systems},
  2009{\natexlab{b}}.

\bibitem[Sutton et~al.(2012)Sutton, Szepesv{\'a}ri, Geramifard, and
  Bowling]{sutton2012dyna}
Sutton, R.~S., Szepesv{\'a}ri, C., Geramifard, A., and Bowling, M.~P.
\newblock Dyna-style planning with linear function approximation and
  prioritized sweeping.
\newblock \emph{arXiv preprint arXiv:1206.3285}, 2012.

\bibitem[Sutton et~al.(2016)Sutton, Mahmood, and White]{sutton2016emphatic}
Sutton, R.~S., Mahmood, A.~R., and White, M.
\newblock An emphatic approach to the problem of off-policy temporal-difference
  learning.
\newblock \emph{The Journal of Machine Learning Research}, 2016.

\bibitem[Tang et~al.(2019)Tang, Feng, Li, Zhou, and Liu]{tang2019doubly}
Tang, Z., Feng, Y., Li, L., Zhou, D., and Liu, Q.
\newblock Doubly robust bias reduction in infinite horizon off-policy
  estimation.
\newblock \emph{arXiv preprint arXiv:1910.07186}, 2019.

\bibitem[Thomas et~al.(2015)Thomas, Theocharous, and
  Ghavamzadeh]{ope:thomas2015high}
Thomas, P.~S., Theocharous, G., and Ghavamzadeh, M.
\newblock High-confidence off-policy evaluation.
\newblock In \emph{Twenty-Ninth AAAI Conference on Artificial Intelligence},
  2015.

\bibitem[Tsitsiklis \& Van~Roy(1999)Tsitsiklis and
  Van~Roy]{tsitsiklis1999average}
Tsitsiklis, J.~N. and Van~Roy, B.
\newblock Average cost temporal-difference learning.
\newblock \emph{Automatica}, 35\penalty0 (11):\penalty0 1799--1808, 1999.

\bibitem[Uehara \& Jiang(2019)Uehara and Jiang]{uehara2019minimax}
Uehara, M. and Jiang, N.
\newblock Minimax weight and q-function learning for off-policy evaluation.
\newblock \emph{arXiv preprint arXiv:1910.12809}, 2019.

\bibitem[Wan et~al.(2019)Wan, Zaheer, White, White, and
  Sutton]{wan2019planning}
Wan, Y., Zaheer, M., White, A., White, M., and Sutton, R.~S.
\newblock Planning with expectation models.
\newblock \emph{arXiv preprint arXiv:1904.01191}, 2019.

\bibitem[Wan et~al.(2020)Wan, Naik, and Sutton]{wan2020learning}
Wan, Y., Naik, A., and Sutton, R.~S.
\newblock Learning and planning in average-reward markov decision processes.
\newblock \emph{arXiv preprint arXiv:2006.16318}, 2020.

\bibitem[Xie et~al.(2018)Xie, Liu, Xu, Ghavamzadeh, Chow, Lyu, and
  Yoon]{liu2018block}
Xie, T., Liu, B., Xu, Y., Ghavamzadeh, M., Chow, Y., Lyu, D., and Yoon, D.
\newblock A block coordinate ascent algorithm for mean-variance optimization.
\newblock In \emph{Advances in Neural Information Processing Systems}, 2018.

\bibitem[Xie et~al.(2019)Xie, Ma, and Wang]{xie2019towards}
Xie, T., Ma, Y., and Wang, Y.-X.
\newblock Towards optimal off-policy evaluation for reinforcement learning with
  marginalized importance sampling.
\newblock In \emph{Advances in Neural Information Processing Systems}, 2019.

\bibitem[Yu(2017)]{yu2017convergence}
Yu, H.
\newblock On convergence of some gradient-based temporal-differences algorithms
  for off-policy learning.
\newblock \emph{arXiv preprint arXiv:1712.09652}, 2017.

\bibitem[Yu \& Bertsekas(2009)Yu and Bertsekas]{yu2009convergence}
Yu, H. and Bertsekas, D.~P.
\newblock Convergence results for some temporal difference methods based on
  least squares.
\newblock \emph{IEEE Transactions on Automatic Control}, 2009.

\bibitem[Yu et~al.(2020)Yu, Thomas, Yu, Ermon, Zou, Levine, Finn, and
  Ma]{yu2020mopo}
Yu, T., Thomas, G., Yu, L., Ermon, S., Zou, J., Levine, S., Finn, C., and Ma,
  T.
\newblock Mopo: Model-based offline policy optimization.
\newblock \emph{arXiv preprint arXiv:2005.13239}, 2020.

\bibitem[Zhang et~al.(2020{\natexlab{a}})Zhang, Dai, Li, and
  Schuurmans]{zhang2020gendice}
Zhang, R., Dai, B., Li, L., and Schuurmans, D.
\newblock Gendice: Generalized offline estimation of stationary values.
\newblock In \emph{International Conference on Learning Representations},
  2020{\natexlab{a}}.

\bibitem[Zhang et~al.(2020{\natexlab{b}})Zhang, Liu, and
  Whiteson]{zhang2020gradientdice}
Zhang, S., Liu, B., and Whiteson, S.
\newblock Gradient{DICE}: Rethinking generalized offline estimation of
  stationary values.
\newblock In \emph{Proceedings of the 37th International Conference on Machine
  Learning}, 2020{\natexlab{b}}.

\bibitem[Zhang et~al.(2020{\natexlab{c}})Zhang, Liu, and
  Whiteson]{zhang2020per}
Zhang, S., Liu, B., and Whiteson, S.
\newblock Mean-variance policy iteration for risk-averse reinforcement
  learning.
\newblock In \emph{Proceedings of the 35th AAAI Conference on Artificial
  Intelligence}, 2020{\natexlab{c}}.

\bibitem[Zhang et~al.(2020{\natexlab{d}})Zhang, Liu, Yao, and
  Whiteson]{zhang2019provably}
Zhang, S., Liu, B., Yao, H., and Whiteson, S.
\newblock Provably convergent two-timescale off-policy actor-critic with
  function approximation.
\newblock In \emph{Proceedings of the 37th International Conference on Machine
  Learning}, 2020{\natexlab{d}}.

\end{thebibliography}
\bibliographystyle{icml2021}

\onecolumn
\newpage
\appendix

\section{Proofs}
We first state a general result from \citet{borkar2009stochastic} which will be repeatedly used.
Consider updating $\kappa \in \R^K$ as
\begin{align}
\kappa_{k+1} \doteq \kappa_{k} + \alpha_k (G_{k+1} \kappa_k + h_{k+1} + o(1)),
\end{align}
where $G_{k+1} \in \R^{K \times K}, h_{k+1} \in \R^K$, 
and $o(1)$ denotes some bounded random or deterministic noise that converges to 0 as $k \to \infty$. 
Assuming
\begin{assumption}
\label{assu:borkar-lr}
$\qty{\alpha_k}$ is a positive deterministic nonincreasing sequence satisfying $\sum_k \alpha_k = \infty, \sum_k \alpha_k^2 < \infty$
\end{assumption}
\begin{assumption}
\label{assu:borkar-mds}
There exist $\bar{G} \in \R^{K \times K}$ and $\bar{h} \in \R^K$ such that
\begin{align}
M_{k+1} &\doteq G_{k+1}\kappa_k + h_{k+1} - \bar{G}\kappa_k - \bar{h}
\end{align}
satisfies
\begin{enumerate}
	\item $\E[M_{k+1} \mid \mathcal{F}_k] = \tb{0}$ a.s..
	\item $\E[\norm{M_{k+1}}^2 \mid \mathcal{F}_k] \leq C(1 + \norm{\kappa_k}^2)$ for some constant $C > 0$ a.s.. 
\end{enumerate}
Here
\begin{align}
\mathcal{F}_k \doteq \sigma(x_0, M_1, M_2, \dots, M_k),
\end{align}
where $\sigma(\cdot)$ denotes the $\sigma$-field.
\end{assumption}
\begin{assumption}
\label{assu:borkar-nd}
The real part of every eigenvalue of $\bar{G}$ is strictly negative.
\end{assumption}
\begin{theorem}
\label{thm:borkar}
\citep{borkar2009stochastic}
Under Assumptions~\ref{assu:borkar-lr}-~\ref{assu:borkar-nd},
almost surely,
\begin{align}
\lim_{k \to \infty} \kappa_k = -\bar{G}^{-1}\bar{h}
\end{align}
\end{theorem}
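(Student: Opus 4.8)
The plan is to prove this via the ODE method for stochastic approximation, treating the recursion as a noisy Euler discretization of the linear ODE $\dot{\kappa} = \bar{G}\kappa + \bar{h}$. First I would rewrite the update in canonical form by substituting the decomposition from Assumption~\ref{assu:borkar-mds}: since $G_{k+1}\kappa_k + h_{k+1} = \bar{G}\kappa_k + \bar{h} + M_{k+1}$, the recursion becomes
\begin{align}
\kappa_{k+1} = \kappa_k + \alpha_k\qty(\bar{G}\kappa_k + \bar{h} + M_{k+1} + o(1)),
\end{align}
a drift $\bar{G}\kappa + \bar{h}$ corrupted by the martingale-difference noise $M_{k+1}$ and a vanishing perturbation. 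Because Assumption~\ref{assu:borkar-nd} makes $\bar{G}$ Hurwitz, no eigenvalue vanishes, so $\bar{G}$ is invertible and $\kappa^* \doteq -\bar{G}^{-1}\bar{h}$ is the unique equilibrium of the ODE. Solving the Lyapunov equation $\bar{G}^\top P + P\bar{G} = -I$ for the positive definite $P$ guaranteed by the Hurwitz property, the quadratic $V(\kappa) = (\kappa - \kappa^*)^\top P(\kappa - \kappa^*)$ is a strict Lyapunov function, certifying that $\kappa^*$ is globally asymptotically stable for the ODE.

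The next step is to control the noise. The martingale-difference property $\E[M_{k+1}\mid\mathcal{F}_k] = \tb{0}$ together with the quadratic growth bound $\E[\norm{M_{k+1}}^2\mid\mathcal{F}_k]\le C(1 + \norm{\kappa_k}^2)$ from Assumption~\ref{assu:borkar-mds} and the square-summability $\sum_k\alpha_k^2 < \infty$ from Assumption~\ref{assu:borkar-lr} ensure, by applying the martingale convergence theorem to the partial sums $\sum_k\alpha_k M_{k+1}$ (after localizing on the event that the iterates stay bounded), that the cumulative noise contribution converges; hence the interpolated trajectory asymptotically tracks the ODE on compact time windows, with the $o(1)$ term handled identically as an asymptotically negligible perturbation. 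Combined with $\sum_k\alpha_k = \infty$, which forces the algorithmic timescale $t_k \doteq \sum_{i<k}\alpha_i$ to diverge, this is the standard setup in which the limit set of $\qty{\kappa_k}$ is an internally chain-transitive invariant set of the ODE. Since $\kappa^*$ is globally asymptotically stable, the only such set is the singleton $\qty{\kappa^*}$, giving $\lim_k\kappa_k = \kappa^* = -\bar{G}^{-1}\bar{h}$ almost surely.

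The main obstacle is establishing almost-sure boundedness $\sup_k\norm{\kappa_k} < \infty$, which the tracking argument presupposes but which is not automatic here: the bound $C(1 + \norm{\kappa_k}^2)$ permits the fluctuations to grow with the iterate. I would resolve this via the Borkar--Meyn stability criterion. Here the drift $\kappa \mapsto \bar{G}\kappa + \bar{h}$ is affine, so its scaling limit $h_\infty(\kappa) \doteq \lim_{c\to\infty}(\bar{G}(c\kappa) + \bar{h})/c = \bar{G}\kappa$ is its homogeneous part, and the scaled ODE $\dot{\kappa} = \bar{G}\kappa$ has the origin as a globally asymptotically stable equilibrium precisely because $\bar{G}$ is Hurwitz (using $V_0(\kappa) = \kappa^\top P\kappa$). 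The Borkar--Meyn theorem then yields $\sup_k\norm{\kappa_k} < \infty$ almost surely, closing the gap.

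Finally, I would note that for this purely linear, Hurwitz instance one can bypass the ODE machinery entirely with a self-contained stochastic-Lyapunov argument: setting $e_k \doteq \kappa_k - \kappa^*$, one checks using $\bar{G}\kappa^* + \bar{h} = \tb{0}$ that $\E[V(\kappa_{k+1})\mid\mathcal{F}_k] \le (1 + c_1\alpha_k^2)V(\kappa_k) - c_2\alpha_k\norm{e_k}^2 + c_3\alpha_k^2$, an almost-supermartingale inequality. The Robbins--Siegmund theorem then simultaneously delivers boundedness (convergence of $V(\kappa_k)$ to a finite limit) and, via $\sum_k\alpha_k\norm{e_k}^2 < \infty$ together with $\sum_k\alpha_k = \infty$, forces that limit to be zero, i.e. $\kappa_k \to \kappa^*$. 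The delicate point in this alternative route is absorbing the $o(1)$ and the $\alpha_k^2\norm{\kappa_k}^2$ noise contributions into the negative drift term for all large $k$, which is possible precisely because $\alpha_k \to 0$.
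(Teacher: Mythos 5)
Your proposal is correct and is essentially the paper's own argument unpacked: the paper proves this theorem purely by citation, combining the third extension of Theorem 2 in Chapter 2.2 of \citet{borkar2009stochastic} (the ODE-method convergence result, which handles the $o(1)$ perturbation and the chain-transitivity argument you sketch) with Theorem 7 in Chapter 3 (the Borkar--Meyn stability criterion you invoke for almost-sure boundedness via the scaled ODE $\dot{\kappa} = \bar{G}\kappa$). Your closing Robbins--Siegmund route is a self-contained extra beyond what the paper does, and you correctly flag its one delicate point (absorbing the non-summable $o(1)$ contribution into the negative drift), but the main route matches the cited machinery exactly.
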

Theorem~\ref{thm:borkar} combines the third extension of Theorem 2 in Chapter 2.2 and Theorem 7 in Chapter 3 of \citet{borkar2009stochastic}.

\subsection{Proof of Theorem~\ref{thm:gq1-convergence}}
\label{sec:proof-gq1}
\begin{proof}
The proof mimics the convergence proof of GTD2 in \citet{sutton2009fast}.
We proceed by verifying Assumptions~\ref{assu:borkar-lr}-~\ref{assu:borkar-nd} thus invoking Theorem~\ref{thm:borkar}.
With $\kappa_k \doteq [\nu_k^\top, u_k^\top]^\top$,
we rewrite~\eqref{eq:gq1-update} as
\begin{align}
\kappa_{k+1} \doteq \kappa_k + \alpha_k (G_{k+1} \kappa_k + h_{k+1}),
\end{align}
where 
\begin{align}
G_{k+1} &\doteq \mqty[-y_k y_k^\top & y_k(y_k' - y_k)^\top - y_k e_1^\top \\
(y_k - y_k')y_k^\top  + e_1 y_k^\top & -\eta I_0], \\
h_{k+1} &\doteq \mqty[y_k r_k \\ \tb{0}], I_0 \doteq \mqty[0 &\tb{0}^\top \\ \tb{0} & I].
\end{align}
The asymptotic behavior of $\qty{\kappa_k}$ is governed by
\begin{align}
\bar{G} &\doteq \E[G_{k+1}] = \mqty[-C & A \\ -A^\top & - \eta I_0], \\
\bar{h} &\doteq \E[h_{k+1}] = \mqty[b \\ \tb{0} ],
\end{align} 
where 
\begin{align}
A &\doteq Y^\top D(P_\pi - I) Y - Y^\top d_\mu e_1^\top \\
&= \mqty[-1 &\tb{1}^\top D(P_\pi - I)X \\-X^\top d_\mu & X^\top D(P_\pi - I)X] \\
b &\doteq \mqty[Y^\top D r \\ \tb{0}].
\end{align}
Assumption~\ref{assu:borkar-lr} is satisfied by our requirement on $\qty{\alpha_k}$.
For Assumption~\ref{assu:borkar-mds},
we define
\begin{align}
M_{k+1} &\doteq G_{k+1}\kappa_k + h_{k+1} - \bar{G}\kappa_k - \bar{h}.
\end{align}
It is easy to see
\begin{align}
\E[M_{k+1} \mid \mathcal{F}_k] &= \E[G_{k+1}] \kappa_k + \E[h_{k+1}] - \bar{G} \kappa_k - \bar{h} = \tb{0} \\
\E[\norm{M_{k+1}}^2 \mid \mathcal{F}_k] &\leq \frac{1}{2}\E[\norm{G_{k+1} - \bar{G}}^2\norm{\kappa_k}^2 + \norm{h_{k+1} - \bar{h}}^2 | \mathcal{F}_k].
\end{align}
Because our samples are generated in an i.i.d fashion, Assumption~\ref{assu:borkar-mds} is guaranteed to hold.

To verify Assumption~\ref{assu:borkar-nd},
we first show $\det(\bar{G}) \neq 0$.
Using the rule of block matrix determinant, 
we have
\begin{align}
\det(\bar{G}) = \det(C)\det(\eta I_0 + A^\top C^{-1}A).
\end{align}
Assumption~\ref{assu:nonconstant} ensures $C$ is positive definite and
$A^\top C^{-1}A$ is positive semidefinite,
implying $\eta I_0 + A^\top C^{-1} A$ is positive semidefinite.
For any $z \neq \tb{0} \in \R^{k+1}$,
$z^\top I_0 z = 0$ if and only if $z$ has the form $\mqty[c \\ \tb{0}]$ for some $c \neq 0 \in \R$,
implying $A^\top z \neq \tb{0}$,
i.e., $z^\top A^\top C^{-1} A z \neq 0$.
So as long as $\eta > 0$, $z^\top (\eta I_0 + A^\top C^{-1} A)z \neq 0$,
implying $\eta I_0 + A^\top C^{-1} A$ is positive definite.
It follows easily that $\det(\bar{G}) \neq 0$.
Let $\lambda \in \C$ be an eigenvalue of $\bar{G}$.
$\det(\bar{G}) \neq 0$ implies $\lambda \neq 0$.
Let $z \neq \tb{0} \in \C^{2K+2}$ be the corresponding normalized eigenvector of $\lambda$, i.e., $z^H z = 1$, 
where $z^H$ is the conjugate transpose of $z$.
Let $z = \mqty[z_1 \\ z_2]$, we have
\begin{align}
\lambda &= z^H \bar{G} z =-z_1^H Cz_1 - z_2^HA^\top z_1 + z_1^H A z_2 - \eta z_2^H I_0 z_2.
\end{align}
As $(z_2^HA^\top z_1)^H = z_1^HAz_2$,
we have $\Re(- z_2^HA^\top z_1 + z_1^H A z_2) = 0$,
where $\Re(\cdot)$ denotes the real part.
So 
\begin{align}
\Re(\lambda) = -z_1^H Cz_1 - \eta z_2^H I_0 z_2 \leq 0.
\end{align}
Because $\lambda \neq 0$, we have $\Re(\lambda) < 0$.
Assumption~\ref{assu:borkar-nd} then holds.
Invoking Theorem~\ref{thm:borkar} yields 
\begin{align}
\lim_k \kappa_k = -\bar{G}^{-1} \bar{h} \qq{almost surely.}
\end{align}
Let $u^*_\eta$ be the lower half of $-\bar{G}^{-1}\bar{h}$,
we have
\begin{align}
u^*_\eta \doteq -(\eta I_0 + A^\top C^{-1} A)^{-1}A^\top C^{-1}b.
\end{align} 
From~\eqref{eq:gq1-saddlepoint}, 
we can rewrite $J_{1, \eta}(u)$ as
\begin{align}
J_{1, \eta}(w)
&= \norm{Au + b}_{C^{-1}}^2 + \eta u^\top I_0 u
\end{align}
It is easy to verify (e.g., using the first order optimality condition of $J_{1, \eta}(u)$) that $u^*_\eta$ is the unique minimizer of $J_{1, \eta}(u)$.

It can also be seen that if $\eta = 0$ and $A$ is invertible,
$\det(\bar G) \neq 0$ as well
and $u_{\eta = 0}^* = - A^{-1} b = u_{TD}$.
\end{proof}

\subsection{Proof of Proposition~\ref{prop:gq1-model-error}}
\label{sec:gq1-model-error}
\begin{proof}

$u^*$ is a TD fixed point implies
\begin{align}
    \E[\delta_k(u^*) y_k] = \tb{0},
\end{align}
which implies
\begin{align}
Y^\top D(P_\pi - I)Yu^* - Y^\top d_\mu e_1^\top u^* + Y^\top Dr = \tb{0},
\end{align}
expanding which yields
\begin{align}
\hat{r}^* - d_\mu^\top (r + P_\pi X w^* - X w^*) &= \tb{0}, \\
X^\top D (r - \hat{r}^* \tb{1} + P_\pi X w^* - X w^*) &= \tb{0}.
\end{align}
So we have
\begin{align}
\norm{X^\top D (r - \hat{r}^* \tb{1} + P_\pi X w^* - X w^*)}_{C^{-1}}^2 = \tb{0},
\end{align}
implying
\begin{align}
\norm{\Pi_X (r - \hat{r}^* \tb{1} + P_\pi X w^* - X w^*)}_{D}^2 = \tb{0}.
\end{align}
Using the Schur complement,
Assumption~\ref{assu:mupi} implies (see \citet{kolter2011fixed} for more details)
\begin{align}
\norm{\Pi_X P_\pi Xw}_D \leq \xi \norm{Xw}_D
\end{align}
holds for any $w \in \R^K$.
We have
\begin{align}
&\norm{Xw^* - q_\pi^c}_D \\
\leq& \norm{Xw^* - \Pi_X q_\pi^c}_D + \norm{\Pi_X q_\pi^c - q_\pi^c}_D \\
=&\norm{\Pi_X(r + P_\pi X w^* - \hat{r}^* \tb{1}) - \Pi_X(r + P_\pi q_\pi^c - r_\pi \tb{1})}_D + \norm{\Pi_X q_\pi^c - q_\pi^c}_D \\
\leq& \norm{\Pi_X P_\pi X w^* - \Pi_X P_\pi q_\pi^c}_D + \norm{\Pi_X (\hat{r}^* \tb{1} - r_\pi \tb{1})}_D + \norm{\Pi_X q_\pi^c - q_\pi^c}_D  \\
=& \norm{\Pi_X P_\pi X w^* - \Pi_X P_\pi q_\pi^c}_D + \norm{X (X^\top D X)^{-1}(X^\top D \tb{1})(\hat{r}^* - r_\pi )}_D + \norm{\Pi_X q_\pi^c - q_\pi^c}_D \\
=& \norm{\Pi_X P_\pi X w^* - \Pi_X P_\pi q_\pi^c}_D + \norm{\Pi_X q_\pi^c - q_\pi^c}_D  \qq{(Using $X^\top d_\mu = \tb{0}$)}\\
\leq& \norm{\Pi_X P_\pi X w^* - \Pi_X P_\pi \Pi_X q_\pi^c}_D + \norm{\Pi_X P_\pi \Pi_X q_\pi^c - \Pi_X P_\pi q_\pi^c}_D + \norm{\Pi_X q_\pi^c - q_\pi^c}_D \\
\leq& \xi \norm{Xw^* - \Pi_X q_\pi^c}_D + \norm{P_\pi}_D \norm{\Pi_X q_\pi^c - q_\pi^c}_D  + \norm{\Pi_X q_\pi^c - q_\pi^c}_D \\
=& \xi \norm{Xw^* - q_\pi^c}_D + (\norm{P_\pi}_D + 1) \norm{\Pi_X q_\pi^c - q_\pi^c}_D.
\end{align}
From the above derivation we have
\begin{align}
\norm{Xw^* - q_\pi^c}_D \leq \frac{\norm{P_\pi}_D + 1}{1 - \xi} \norm{\Pi_X q_\pi^c - q_\pi^c}_D.
\end{align}
Take the infimum
\begin{align}
    \inf_{c \in \R} \norm{Xw^* - q_\pi^c}_D \leq \inf_{c \in \R} \frac{\norm{P_\pi}_D + 1}{1 - \xi} \norm{\Pi_X q_\pi^c - q_\pi^c}_D.
\end{align}
For the reward rate at the fixed point, we have, for all $c \in \R$,
\begin{align}
|r_\pi - \hat{r}^*| &= |d_\mu^\top (P_\pi - I)(Xw^* - q_\pi^c)| \\
&= |d_\mu^\top (P_\pi - I)D^{-\frac{1}{2}}D^{\frac{1}{2}}(Xw^* - q_\pi^c)| \\
&\leq \norm{d_\mu^\top (P_\pi - I)}_{D^{-1}} \norm{Xw^* - q_\pi^c}_D,
\end{align}
where the inequality is due to the Cauchy-Schwarz inequality.

\end{proof}

\subsection{Projected Diff-GQ1}
\label{sec:pgq1}
The Projected Diff-GQ1 optimizes the MSPBE$_1$ objective: 
\begin{align}
\text{MSPBE}_1(u) = \max_{\nu \in \R^{K+1}} J_{1, \eta=0}(u, \nu),
\end{align}
where 
\begin{align}
J_{1, \eta=0}(u, \nu) \doteq 2 \nu^\top Y^\top D \bar\delta(u) - \nu^\top C \nu.
\end{align}
Similar to the Revised GTD Algorithms in \citet{liu2015finite}, the Projected Diff-GQ1 update $u_k$ and $\nu_k$ as
\begin{align}
\nu_{k+1} &\doteq \Pi_{\Theta_1} \Big(\nu_k + \alpha \big(-y_k y_k^\top \nu_k + y_k (y_k' - y_k)^\top u_k - y_k e_1^\top u_k + y_k R_k \big) \Big), \\
u_{k+1} &\doteq \Pi_{\Theta_2} \Big(u_k + \alpha \big( (y_k - y_k')y_k^\top \nu_k + e_1y_k^\top \nu_k \big) \Big),
\end{align}
where $\alpha$ is a constant learning rate, 
$\Theta_i$ is a compact subset in $\R^{K+1}$ and $\Pi_{\Theta_i}$ is projection into $\Theta_i$ w.r.t. $\norm{\cdot}$.
If $A$ is nonsingular,
$J_{1, \eta=0}(u, \nu)$ has a unique saddle point,
which we refer to as $(u^*, \nu^*)$.
It is easy to see $u^*$ is the unique minimizer of $J_{1, \eta=0}(u)$.
We have
\begin{proposition}
If Assumptions~\ref{assu:chain},~\ref{assu: positive dmu},~\ref{assu:linearly independent},~\ref{assu:nonconstant},~\ref{assu:mupi}, \&~\ref{assu: 0 mean feature vector} hold, $\nu^* \in \Theta_1, u^* \in \Theta_2$, and $A$ is nonsingular, with properly tuned $\alpha$, for any $\delta \in (0, 1)$, at least with probability $1 - \delta$,
the iterate $\qty{\hat{u}_k = [\hat r_k, w_k^\top]^\top}$ generated by Projected Diff-GQ1 satisfies
\begin{align}
\left(\bar{\hat{r}}_k - r_\pi \right)^2 = \mathcal{O}\left(\frac{C_1 \delta - C_2 \delta \ln \delta}{\sqrt{k}}\right) + \mathcal{O}\left(\inf_{c \in \R} \norm{\Pi_X q_\pi^c - q_\pi^c}_D^2\right), \\
\inf_{c \in \R}\norm{X \bar w_k - q_\pi^c }^2 = \mathcal{O}\left(\frac{C_1 \delta - C_2 \delta \ln \delta}{\sqrt{k}}\right) + \mathcal{O}\left(\inf_{c \in \R} \norm{\Pi_X q_\pi^c - q_\pi^c}_D^2\right),
\end{align}
where $q_\pi^c \doteq q_\pi + c \tb{1}$, $\bar{\hat r}_k \doteq (1 / k) \sum_{i=1}^{k} \hat{r}_i, \bar{w}_k \doteq (1 / k) \sum_{i=1}^{k} w_i$, $C_1, C_2 > 0$ are constants
\end{proposition}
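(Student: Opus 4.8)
The plan is to combine three ingredients: (i) a high-probability finite-sample convergence result for the projected primal-dual (saddle-point) recursion to the saddle point $(u^*, \nu^*)$ of $J_{1,\eta=0}(u,\nu)$, (ii) the fact that $u^* = u_{\text{TD}}$ minimizes $\text{MSPBE}_1$ exactly (so $\text{MSPBE}_1(u^*) = 0$ when $A$ is nonsingular), and (iii) the deterministic bounds from Proposition~\ref{prop:gq1-model-error} relating a TD fixed point to the true quantities $r_\pi$ and $q_\pi^c$. The rate $\mathcal{O}(k^{-1/2})$ is the signature of averaged stochastic gradient descent/ascent on a convex-concave objective with bounded domains, so the first step is to cast Projected Diff-GQ1 in that framework and invoke the standard Nemirovski-style mirror-descent/saddle-point analysis (as used for Revised GTD in \citet{liu2015finite}).

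First I would set up the primal-dual recursion. Writing $J_{1,\eta=0}(u,\nu)$ as a convex-concave saddle function with the gradients $\nabla_\nu J = 2(Y^\top D \bar\delta(u) - C\nu)$ and $\nabla_u J$ computed from the update, I would verify that the stochastic updates in the Projected Diff-GQ1 recursion are unbiased estimates of these gradients (this uses the i.i.d. sampling from $d_{\mu\pi}$, exactly as in the convergence proof of Diff-GQ1), and that because the iterates are projected onto compact sets $\Theta_1, \Theta_2$, the stochastic gradients have bounded second moments. With the saddle point lying in the interior ($\nu^* \in \Theta_1$, $u^* \in \Theta_2$), the projections do not exclude the optimum. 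Then the standard result for averaged iterates of projected stochastic primal-dual updates gives, with probability at least $1-\delta$, a bound on the primal-dual gap of order $\mathcal{O}\big((C_1\delta - C_2\delta\ln\delta)/\sqrt{k}\big)$ at the averaged point $(\bar u_k, \bar\nu_k)$; the logarithmic-in-$\delta$ factor comes from the Azuma/Freedman concentration of the martingale noise terms. Since $A$ is nonsingular, strong convexity of $\text{MSPBE}_1(u) = \norm{Au+b}_{C^{-1}}^2$ in $u$ converts the gap bound into the iterate bound $\norm{\bar u_k - u^*}^2 = \mathcal{O}(k^{-1/2})$ (up to the stated high-probability factor), which controls both $(\bar{\hat r}_k - \hat r^*)^2$ and $\norm{X\bar w_k - Xw^*}_D^2$.

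Next I would stitch together the optimization error and the approximation error. The inequality $\inf_c \norm{X\bar w_k - q_\pi^c}_D \le \norm{X\bar w_k - Xw^*}_D + \inf_c\norm{Xw^* - q_\pi^c}_D$ splits the total error into the $\mathcal{O}(k^{-1/2})$ optimization term and a fixed-point quality term. For the latter I invoke Proposition~\ref{prop:gq1-model-error} (valid here because Assumptions~\ref{assu:mupi} and~\ref{assu: 0 mean feature vector} hold and $A$ nonsingular guarantees the TD fixed point exists and equals $u^*$), which bounds $\inf_c\norm{Xw^* - q_\pi^c}_D$ and $|r_\pi - \hat r^*|$ by constants times $\inf_c\norm{\Pi_X q_\pi^c - q_\pi^c}_D$. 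Squaring and using $(a+b)^2 \le 2a^2 + 2b^2$ yields the two displayed bounds. For the reward-rate line I would additionally carry through the $|r_\pi - \bar{\hat r}_k| \le |r_\pi - \hat r^*| + |\hat r^* - \bar{\hat r}_k|$ decomposition, where the second term is controlled by the same iterate bound since $\hat r$ is the first coordinate of $u$.

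The main obstacle I expect is the high-probability concentration step rather than the expected-value bound. Getting an $\mathcal{O}(1/\sqrt{k})$ rate \emph{in expectation} for averaged saddle-point SGD is routine, but upgrading to a high-probability statement with the explicit $C_1\delta - C_2\delta\ln\delta$ dependence requires a careful martingale concentration argument (Freedman's or Azuma-Hoeffding's inequality) applied to the accumulated gradient-noise inner products, exploiting the boundedness forced by the projections $\Pi_{\Theta_i}$ to bound the martingale increments and predictable quadratic variation. Properly handling the coupling between the primal and dual noise sequences, and ensuring the strong-convexity constant (which depends on the smallest singular value of $A$ and on $C^{-1}$) is correctly propagated into the final constant, is the delicate part; everything else is an assembly of the convergence theorem, the exact-fixed-point identity, and Proposition~\ref{prop:gq1-model-error}.
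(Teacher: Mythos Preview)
Your proposal is correct and follows essentially the same route as the paper: invoke the high-probability finite-sample bound for averaged projected saddle-point iterates from \citet{liu2015finite} (which the paper simply cites as a lemma with proof omitted), then decompose each target quantity via $(a+b)^2 \le 2a^2 + 2b^2$ into an optimization term controlled by $\norm{\bar u_k - u^*}^2$ and an approximation term controlled by Proposition~\ref{prop:gq1-model-error}. The paper's proof is terser because it delegates the entire concentration/saddle-point analysis you sketch to Liu et al., but the structure and all key ingredients match.
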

\begin{proof}
We first state a lemma.
\begin{lemma}
\label{lem:boliu}
With at least probability $1 - \delta$,
\begin{align}
\norm{\frac{1}{k}\sum_{i=1}^k u_i - u^*}^2 \leq K_0 \sqrt{\frac{5}{k}} \left(8 + 2\ln \frac{2}{\delta}\right) \delta = \mathcal{O}\left(\frac{C_1 \delta - C_2 \delta \ln \delta}{\sqrt{k}}\right),
\end{align}
where $K_0, C_1, C_2 > 0$ are constants.
\end{lemma}
\begin{proof}
The proof is the same as the finite sample analysis of GTD2 in \citet{liu2015finite} up to change of notations (see Proposition 3, the proof of Theorem 1, the proof of Proposition 5 in \citet{liu2015finite}).
We, therefore, omit the proof to avoid verbatim repetition.
\end{proof}
Note that
\begin{align}
\left(\frac{1}{k} \sum_{i =1}^k \hat{r}_i - r_\pi\right)^2 &\leq 2\left(\frac{1}{k} \sum_{i =1}^k \hat{r}_i - r^*\right)^2 + 2 (r^* - r_\pi)^2 \leq 2 \norm{\frac{1}{k}\sum_{i=1}^k u_i - u^*}^2 + 2 (r^* - r_\pi)^2,
\end{align}
and for any $c \in \R$,
\begin{align}
\norm{\frac{1}{k} \sum_{i =1}^k X w_i - q_\pi^c}^2 &\leq 2\norm{\frac{1}{k} \sum_{i =1}^k X w_i - Xw^*}^2 + 2 \norm{Xw^* - q_\pi^c}^2 \leq 2 \norm{X}^2 \norm{\frac{1}{k}\sum_{i=1}^k u_i - u^*}^2 + 2 \norm{Xw^* - q_\pi^c}^2.
\end{align}
Invoking Proposition~\ref{prop:gq1-model-error} and Lemma~\ref{lem:boliu} to bound the RHS of the above equations completes the proof.
\end{proof}


\subsection{Proof of Theorem~\ref{thm:gq2-convergence}}
\label{sec:proof-gq2}
\begin{proof}
With $\kappa_k \doteq [\nu_{k+1}^\top, w_{k+1}^\top]^\top$, 
we rewrite~\eqref{eq:gq2-update-uw} as
\begin{align}
\kappa_{k+1} = \kappa_k + \alpha_k (G_{k+1} \kappa_k + h_{k+1}),
\end{align}
where
\begin{align}
G_{k+1} &\doteq \mqty[-x_{k, 1} x_{k, 1}^\top & x_{k, 1}(x_{k, 1}'^\top - x_{k, 1}^\top) -  x_{k, 1}(x_{k, 2}'^\top - x_{k, 2}^\top) \\ 
-(x_{k, 1} - x_{k, 1}')x_{k, 1}^\top + (x_{k, 2} - x_{k, 2}')x_{k, 1}^\top & -\eta I], \\
h_{k+1} &\doteq \mqty[r_{k, 1} x_{k, 1} - r_{k, 2} x_{k, 1}\\ \tb{0}]. 
\end{align}
The asymptotic behavior of $\qty{\kappa_k}$ is governed by
\begin{align}
\bar{G} &\doteq \E[G_{k+1}] = \mqty[-C_2 & {A_2} \\ -{A_2}^\top & -\eta I] \\
\bar{h} &\doteq \E[h_{k+1}] = \mqty[{b_2} \\ \tb{0}],
\end{align} 
where ${A_2} \doteq X^\top (D - d_\mu d_\mu^\top)(P_\pi - I)X , b_2 \doteq X^\top (D - d_\mu d_\mu^\top) r$.
Similar to the proof of Theorem~\ref{thm:gq1-convergence} in Section~\ref{sec:proof-gq1},
up to change of notations,
we can get
\begin{align}
\lim_{k \to \infty} w_k = w^*_\eta,
\end{align}
where
\begin{align}
w^*_\eta \doteq -(\eta I + A_2^\top C_2^{-1} A_2)^{-1}A_2^\top C_2^{-1}b_2
\end{align} 
is the unique minimizer of $J_{2, \eta}(w)$.
We then rewrite~\eqref{eq:gq2-update-r} as
\begin{align}
\hat{r}_{k+1} \doteq \hat{r}_k + \beta_k \Big(\frac{1}{2}\sum_{i=1}^2 (r_{k, i} + x_{k, i}'^\top w^*_\eta - x_{k, i}^\top w^*_\eta) - \hat{r}_k + o(1)\Big).
\end{align}
Similar to the convergence proof of $\qty{\kappa_k}$,
we can obtain 
\begin{align}
\lim_{k \to \infty} \hat{r}_k = d_\mu^\top (r + P_\pi X w^*_\eta - X w^*_\eta).
\end{align}

Assumption~\ref{assu: existence of TD fixed points} implies
there exists $w$ such that,
\begin{align}
A_2 w + b_2 = \tb{0},
\end{align}
or equivalently,
\begin{align}
C_2^{-\frac{1}{2}}A_2 w + C_2^{-\frac{1}{2}} b_2 = \tb{0}
\end{align}
has unique or infinite manly solutions.
From standard results of system of linear equations,
this is equivalent to
\begin{align}
C_2^{-\frac{1}{2}}A_2 (C_2^{-\frac{1}{2}}A_2)^\dagger C_2^{-\frac{1}{2}} b_2 = C_2^{-\frac{1}{2}} b_2,
\end{align}
where $(\cdot)^\dagger$ denotes the Moore-Penrose pseudoinverse,
which always exists for any matrix.
By the property of the Moore-Penrose pseudoinverse,
it is easy to see
\begin{align}
w^*_0 \doteq \lim_{\eta \downarrow 0} w^*_\eta = -(C_2^{-\frac{1}{2}}A_2)^\dagger C_2^{-\frac{1}{2}}b_2.
\end{align}
Consequently,
we have
\begin{align}
C_2^{-\frac{1}{2}} (A_2 w^*_0 + b_2) = -C_2^{-\frac{1}{2}} A_2 (C_2^{-\frac{1}{2}}A_2)^\dagger C_2^{-\frac{1}{2}}b_2 + C_2^{-\frac{1}{2}} b_2 = \tb{0},
\end{align}
implying
\begin{align}
A_2 w^*_0 + b_2 = \tb{0}.
\end{align}

It can also be seen that if $A$ is invertible, $w_{0}^* = w_{\text{TD}}$ and $d_\mu^\top (r + P_\pi X w^*_{0} - X w^*_{0}) = \hat r_{\text{TD}}$.
Applying SVD to $C_2^{-\frac{1}{2}}A_2$ and using $\sigma$ to denote its minimum nonzero singular value,
it is easy to see
\begin{align}
\norm{w^*_\eta - w^*_0} \leq \frac{\eta}{\sigma^3} \norm{C_2^{-\frac{1}{2}} b_2}.
\end{align}
\end{proof}

\subsection{Projected Diff-GQ2}
\label{sec:pgq2}
The Projected Diff-GQ2 objective is 
\begin{align}
J_{2, \eta = 0}(w) = \norm{A_2 w + b_2}_{C_2^{-1}}^2 = \max_{\nu \in \R^{K}} J_{2, \eta=0}(w, \nu),
\end{align}
where 
\begin{align}
J_{2, \eta=0}(w, \nu) \doteq 2 \nu^\top X^\top D (\bar{r}_w - d_\mu^\top \bar{r}_w \tb{1}) - \nu^\top C_2 \nu.
\end{align}
The Projected Diff-GQ2 update $w_k$, $\nu_k$ and $\hat{r}_k$ as
\begin{align}
\nu_{k+1} &\doteq \Pi_{\Theta_1} \Big(\nu_k + \alpha \big(-x_{k, 1} x_{k, 1}^\top \nu_k + x_{k, 1}(x_{k, 1}'^\top - x_{k, 1}^\top)w_k -  x_{k, 1}(x_{k, 2}'^\top - x_{k, 2}^\top)w_k + R_{k, 1} x_{k, 1} - R_{k, 2} x_{k, 1} \big) \Big), \\
w_{k+1} &\doteq \Pi_{\Theta_2} \Big(w_k + \alpha \big(-(x_{k, 1} - x_{k, 1}')x_{k, 1}^\top \nu_k + (x_{k, 2} - x_{k, 2}')x_{k, 1}^\top \nu_k \Big), \\
\bar{w}_{k+1} &\doteq \frac{k\bar{w}_k + w_{k+1}}{k+1} \\
\hat{r}_{k+1} &\doteq \hat{r}_k + \beta \Big(\frac{1}{2}\sum_{i=1}^2 (r_{k, i} + x_{k, i}'^\top \bar{w}_k - x_{k, i}^\top \bar{w}_k) - \hat{r}_k \Big),
\end{align}
where $\alpha$ and $\beta$ are constant learning rates, 
$\Theta_i$ is a compact subset in $\R^{K}$ and $\Pi_{\Theta_i}$ is projection into $\Theta_i$ w.r.t. $\norm{\cdot}$.
If $A_2$ is nonsingular,
$J_{2, \eta=0}(w, \nu)$ has a unique saddle point,
which we refer to as $(w^*, \nu^*)$.
It is easy to see $w^*$ is the unique minimizer of $J_{2, \eta=0}(w)$.
We have
\begin{proposition}
If Assumptions~\ref{assu:chain},~\ref{assu: positive dmu},~\ref{assu:linearly independent},~\ref{assu:mupi}, \&~\ref{assu: 0 mean feature vector} hold, $\nu^* \in \Theta_1, w^* \in \Theta_2$, and $A_2$ is nonsingular, then with properly tuned $\alpha$ and $\beta$, for any $\delta \in (0, 1)$, at least with probability $1 - \delta$,
the iterate $\qty{w_k}, \qty{\hat{r}_k}$ generated by Projected Diff-GQ2 satisfies
\begin{align}
\inf_{c \in \R}\norm{X \bar{w}_k - q_\pi^c }^2 &= \mathcal{O}\left(\frac{C_1 \delta - C_2 \delta \ln \delta}{\sqrt{k}}\right) + \mathcal{O}\left(\inf_{c \in \R} \norm{\Pi_X q_\pi^c - q_\pi^c}_D^2\right), \\
\frac{1}{k} \sum_{i = 1}^k \E\left[\left(\hat{r}_i - r_\pi\right)^2\right] &= \mathcal{O}\left(\frac{C_1 \delta - C_2 \delta \ln \delta}{\sqrt{k}}\right) + \mathcal{O}\left(\inf_{c \in \R} \norm{\Pi_X q_\pi^c - q_\pi^c}_D^2\right) + \mathcal{O}\left(1\right), 
\end{align}
where $C_1, C_2 > 0$ are constants, $\bar{w}_k \doteq (1 / k) \sum_{i=1}^{k} w_i$, and the term $\mathcal{O}\left(1\right)$ depends on the variance of $x(S_{k,i}, A_{k, i})$ and $x(S_{k,i}', A_{k, i}')$.
\end{proposition}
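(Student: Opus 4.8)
The plan is to mirror the finite-sample argument already used for Projected Diff-GQ1: dispatch the value-function bound by the same route, and treat the reward-rate bound, which is genuinely new because $\hat r_k$ is now a separate constant-stepsize iterate, on its own. First I would establish the analog of Lemma~\ref{lem:boliu} for the primal-dual iterates $(\nu_k, w_k)$ of Projected Diff-GQ2. These updates form a projected primal-dual scheme on the convex-concave saddle point of $J_{2,\eta=0}(w,\nu)$, and because each iteration draws two i.i.d.\ tuples from $d_{\mu\pi}$, the per-step gradient estimates are unbiased (this is exactly the double-sampling resolution discussed in the main text). Since $A_2$ is nonsingular, $J_{2,\eta=0}(w)$ has the unique minimizer $w^* = w_\text{TD} = -A_2^{-1}b_2$ and $(w^*,\nu^*)$ is the unique saddle point. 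The compactness of $\Theta_1,\Theta_2$ together with the i.i.d.\ sampling puts us in precisely the setting of the finite-sample analysis of GTD2 in \citet{liu2015finite}, so up to a change of notation their argument yields, with probability at least $1-\delta$,
\[
\norm{\bar w_k - w^*}^2 = \mathcal{O}\!\left(\tfrac{C_1\delta - C_2\delta\ln\delta}{\sqrt k}\right).
\]

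For the value-function bound I would proceed exactly as in the Projected Diff-GQ1 proof: decompose $\norm{X\bar w_k - q_\pi^c}^2 \le 2\norm{X}^2\norm{\bar w_k - w^*}^2 + 2\norm{Xw^* - q_\pi^c}^2$, bound the first term by the display above, and bound $\inf_c\norm{Xw^* - q_\pi^c}^2$ by the model-error estimate. Since $A_2$ is nonsingular there is a unique TD fixed point, so Assumption~\ref{assu: existence of TD fixed points} holds and $w^*$ is its $w$-component; the conclusion of Proposition~\ref{prop:gq1-model-error} then controls $\inf_c\norm{Xw^* - q_\pi^c}_D$ (its proof needs only the existence of a fixed point together with Assumptions~\ref{assu:linearly independent},~\ref{assu:mupi},~\ref{assu: 0 mean feature vector}, all available here), and the equivalence of $\norm{\cdot}$ and $\norm{\cdot}_D$, legitimate because $D$ is positive definite, absorbs the constant into the $\mathcal{O}$. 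Taking the infimum over $c$ finishes this bound.

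The reward-rate bound is the crux. Writing $g(w) \doteq d_\mu^\top(r + P_\pi Xw - Xw)$, the update~\eqref{eq:gq2-update-r} is a constant-stepsize exponential moving average whose per-step increment is an unbiased estimate of $g(\bar w_k)$, i.e.\ $\hat r_{k+1} = (1-\beta)\hat r_k + \beta\,\hat y_k$ with $\E[\hat y_k \mid \bar w_k] = g(\bar w_k)$. I would split $\hat r_i - r_\pi$ into the tracking error $\hat r_i - g(\bar w_{i-1})$, the target-drift error $g(\bar w_{i-1}) - g(w^*)$, and the model error $g(w^*) - r_\pi$, then bound $\frac1k\sum_i\E[(\hat r_i - r_\pi)^2]$ via $(a+b+c)^2 \le 3(a^2+b^2+c^2)$. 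The model-error term equals $(\hat r^* - r_\pi)^2$ and is handled by the reward-rate estimate in Proposition~\ref{prop:gq1-model-error}. The drift term is Lipschitz in $w$ since $g$ is affine, so $(g(\bar w_{i-1}) - g(w^*))^2 \le L^2\norm{\bar w_{i-1}-w^*}^2$, and averaging the per-iterate form of the lemma (here I convert the high-probability bound into an expectation bound, which is legitimate because projection keeps all iterates bounded) over $i$ gives the $\mathcal{O}(k^{-1/2})$ rate. The remaining tracking-error term is where the irreducible $\mathcal{O}(1)$ appears.

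The hard part will be this tracking-error analysis. The target $g(\bar w_k)$ is itself a random, slowly-moving process coupled to the primal-dual iterates, so I must separate the steady-state variance of the EMA recursion $\hat r_{i+1}-g(\bar w_i) = (1-\beta)(\hat r_i - g(\bar w_i)) + \beta\xi_i$, with zero-mean noise $\xi_i$ whose variance is bounded by the projection-bounded feature variances, from the transient induced by the drift $g(\bar w_i)-g(\bar w_{i-1})$. With a constant stepsize $\beta$ the stationary contribution is of order $\beta\sigma^2$ and cannot vanish, yielding the $\mathcal{O}(1)$ term depending on $\mathrm{Var}\big(x(S_{k,i},A_{k,i})\big)$ and $\mathrm{Var}\big(x(S_{k,i}',A_{k,i}')\big)$, while the drift contribution must be shown to decay no slower than $k^{-1/2}$ so that it folds into the $\mathcal{O}(k^{-1/2})$ term; controlling the cross terms between $\xi_i$ and the drift is the principal technical obstacle.
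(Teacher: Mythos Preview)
Your handling of the value-function bound is identical to the paper's: same lemma (the Liu--et al.\ GTD2 finite-sample bound transported to the $(w,\nu)$ saddle point of $J_{2,\eta=0}$), same decomposition, same appeal to Proposition~\ref{prop:gq1-model-error}.

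For the reward-rate bound your route is correct but genuinely different from the paper's. You center the analysis on the \emph{moving} target $g(\bar w_k)$ and decompose $\hat r_i - r_\pi$ into tracking error $\hat r_i - g(\bar w_{i-1})$, target drift $g(\bar w_{i-1}) - g(w^*)$, and model error; you then plan to analyze the EMA recursion with a drifting target from scratch, and you rightly flag the noise--drift cross terms as the main technical work. The paper instead centers on the \emph{fixed} target $\hat r^* = g(w^*)$: it writes the $\hat r$ update as stochastic gradient descent on $f(\hat r) = \tfrac12(\hat r^* - \hat r)^2$ with a \emph{biased} stochastic gradient whose bias is exactly $g(\bar w_k) - g(w^*)$ and whose second moment is bounded by a constant (this is the source of the $\mathcal{O}(1)$) plus a multiple of $\norm{\bar w_k - w^*}^2$. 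Having bounded bias and variance, the paper invokes an off-the-shelf biased-SGD convergence result (Theorem~4 in the appendix of \citet{liu2019off}) and reads off $\tfrac1k\sum_i \E[(\hat r_i - \hat r^*)^2]$ directly. This sidesteps the ``principal technical obstacle'' you anticipate: by absorbing the moving target into a gradient bias rather than a target drift, there is no separate cross-term analysis to do. Your decomposition would ultimately reach the same place, but you would essentially be re-deriving the content of that biased-SGD theorem by hand; the paper's centering choice buys a much shorter proof.
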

\begin{proof}
We first state a lemma.
\begin{lemma}
\label{lem:boliu2}
With at least probability $1 - \delta$,
\begin{align}
\norm{\bar{w}_k - w^*}^2 \leq K_0 \sqrt{\frac{5}{k}} (8 + 2\ln \frac{2}{\delta}) \delta,
\end{align}
where $K_0 > 0$ is a constant.
\end{lemma}
\begin{proof}
The proof is the same as the proof of Lemma~\ref{lem:boliu}. 
We, therefore, omit the proof to avoid verbatim repetition.
\end{proof}
We have
\begin{align}
\norm{X\bar{w}_k - q_\pi^c}^2 \leq 2 \norm{X}^2 \norm{\bar{w}_k - w^*}^2 + 2 \norm{Xw^* - q_\pi^c}^2.
\end{align}
Taking infimum both sides and invoking Lemma~\ref{lem:boliu2} and Proposition~\ref{prop:gq1-model-error} to bound RHS completes the proof of the first half.
Let $f(\hat{r}) \doteq \frac{1}{2} (\hat{r}^* - \hat{r})^2, g_k \doteq \frac{1}{2}\sum_{i=1}^2 (r_{k, i} + x_{k, i}'^\top \bar{w}_k - x_{k, i}^\top \bar{w}_k), g_k^* \doteq \frac{1}{2}\sum_{i=1}^2 (r_{k, i} + x_{k, i}'^\top w^* - x_{k, i}^\top w^*),$
we rewrite the $\hat{r}_k$ update as
\begin{align}
\hat{r}_{k+1} &= \hat{r}_k - \beta \xi_k,
\end{align}
where
\begin{align}
\xi_k \doteq -(g_k^* - \hat{r}_k) - (g_k - g_k^*),
\end{align}
in other words, $\hat{r}_k$ is updated following a noisy stochastic gradient $\xi_k$.
Let $\E_k$ denote the expectation w.r.t. $d_{\mu\pi}$ for $S_{k, i}, A_{k, i}, S_{k, i}', A_{k, i}'$.
As $\bar{w}_k$ does not depend on the samples at the $k$-th iteration,
we have
\begin{align}
\norm{\E_k[\nabla f(\hat{r}_k) - \xi_k | \hat{r}_k]} &= \norm{\E_k[g_k - g_k^* | \hat{r}_k]} \leq \norm{d_\mu^\top (P_\pi - I)X} \norm{\bar{w}_k - w^*}\\
\E_k[\norm{\nabla f(\hat{r}_k) - \xi_k}^2 | \hat{r}_k] &\leq 2\E_k[ \norm{\nabla f(\hat{r}_k) + (g_k^* - \hat{r}_k)}^2 + \norm{g_k - g_k^*}^2 | \hat{r}_k] \\
&\leq 2 \E_k[ \norm{\nabla f(\hat{r}_k) + (g_k^* - \hat{r}_k)}^2 | \hat{r}_k] + 2 K_1 \norm{\bar{w}_k - w^*}^2 \\
&\leq 2 (K_2 + K_1 \norm{\bar{w}_k - w^*}^2),
\end{align}
where $K_1$ and $K_2$ are some constants and $K_2$ depends on the variance of $x(S_{k,i}, A_{k, i})$ and $x(S_{k, i}', A_{k, i}')$.
Using Lemma~\ref{lem:boliu2} to bound $\norm{\bar{w}_k - w^*}$ and invoking Theorem 4 in the appendix of \citet{liu2019off} yields
\begin{align}
\frac{1}{k} \sum_{i=1}^k \E[\norm{\nabla f(\hat{r}_i)}^2] \leq \frac{2}{k} (f(\hat{r}_0) - f(\hat r^*)) + 2K_2 + \frac{2}{k} K_1 K_0(8 + 2\ln \frac{2}{\delta}) \delta  \sum_{i=1}^k \sqrt{\frac{5}{i}},
\end{align}
in other words,
\begin{align}
\frac{1}{k} \sum_{i = 1}^k \E\left[\left(\hat{r}_i - \hat{r}^*\right)^2\right] = \mathcal{O}\left(\frac{(C_1 \delta - C_2 \delta \ln \delta)}{\sqrt{k}}\right) + \mathcal{O}(1),
\end{align}
combining which and Proposition~\ref{prop:gq1-model-error} yields
\begin{align}
\frac{1}{k} \sum_{i = 1}^k \E\left[\left(\hat{r}_i - r_\pi\right)^2\right] &= \mathcal{O}\left(\frac{C_1 \delta - C_2 \delta \ln \delta}{\sqrt{k}}\right) + \mathcal{O}\left(\inf_{c \in \R} \norm{\Pi_X q_\pi^c - q_\pi^c}_D^2\right) + \mathcal{O}\left(1\right), 
\end{align}
which completes the proof.
\end{proof}

\section{Algorithm Details}
\subsection{GradientDICE with Linear and Nonlinear Function Approximation}
\label{sec:gradientdice}
Let $d_\pi(s, a)$ be the stationary state action distribution under the target policy $\pi$ (assuming it exists),
GradientDICE aims to learn the density ratio $\frac{d_\pi(s, a)}{d_\mu(s, a)}$.
Let $\tau: \mathcal{S} \times \mathcal{A} \to \R$, 
parameterized by $\theta_{\tau} \in \R^{K_1}$,
be the function to approximate the density ratio,
GradientDICE considers the following problem to optimize $\theta_{\tau}$:
\begin{align}
&\min_{\theta_\tau \in \R^{K_1}} \max_{\theta_\nu \in \R^{K_2}, u \in \R} \E[L_k], \qq{where}\\
&L_k \doteq \tau(S_k, A_k) \nu(S_k', A_k') - \tau(S_k, A_k) \nu(S_k, A_k) - \frac{1}{2} \nu(S_k, A_k)^2 +\lambda (u\tau(S_k, A_k) - u - \frac{u^2}{2}) + \frac{\eta}{2} \norm{\theta_\tau}^2.
\end{align}
Here $\nu: \mathcal{S} \times \mathcal{A} \to \R$, 
parameterized by $\theta_\nu$,
is an auxiliary function and $u$ is an auxiliary variable.
GradientDICE uses primal-dual algorithms to optimize $\theta_\tau, \theta_\nu, u$. 
Let $\alpha$ be a learning rate,
GradientDICE updates are
\begin{align}
\theta_{\tau, k+1} &\doteq \theta_{\tau, k} - \alpha \nabla_{\theta_\tau} L_k, \\
\theta_{\nu, k+1} &\doteq \theta_{\nu, k} + \alpha \nabla_{\theta_\nu} L_k, \\
u_{k+1} &\doteq u_k + \alpha \nabla_u L_k.
\end{align}
We could then use $\frac{1}{N}\sum_{k=1}^{N} \tau(S_k, A_k) R_k$ as an estimate of the reward rate,
which is, however, computationally expensive.
To obtain the average-reward estimate in an efficiently way,
we additionally maintain a scalar estimate $\hat{r}$ directly,
which is updated as 
\begin{align}
\hat{r}_{k+1} \doteq \hat{r}_k + \alpha (\tau(S_k, A_k)R_k - \hat{r}_k).
\end{align} 
\subsection{Diff-SGQ with Nonlinear Function Approximation}
\label{sec:neural-fqe}
Let $q_\theta: \mathcal{S} \times \mathcal{A} \to \R$
be the function to estimate the differential action-value function
parameterized by $\theta \in \R^{K_1}$
and $\hat{r} \in \R$ be the scalar estimate for the average-reward,
Diff-SGQ updates $\theta$ and $\hat{r}$ as
\begin{align}
\theta_{k+1} &\doteq \theta_k + \alpha (R_k - \bar{\hat{r}} + q_{\bar{\theta}}(S_k', A_k') - q_\theta(S_k, A_k)) \nabla_\theta q_\theta(S_k, A_k), \\
\hat{r}_{k+1} &\doteq \hat{r}_k + \alpha (R_k + q_{\bar{\theta}}(S_k', A_k') - q_{\bar{\theta}}(S_k, A_k) - \hat{r}_k),
\end{align}
where $\bar{\theta}$ and $\bar{\hat{r}}$ are parameters of the target network,
which are synchronized with $\theta$ and $\hat{r}$ periodically.

\subsection{Diff-GQ1 with Nonlinear Function Approximation}
\label{sec:neural-gq1}
Let $q \in \R^{\nsa}, \hat{r} \in \R$ be our estimates for the differential action-value function and the average-reward,  
we have
\begin{align}
&\norm{r - \hat{r}\tb{1} + P_\pi q - q}^2_D \\
=& \E[\big(R_k - \hat{r} + q(S_k', A_k') - q(S_k, A_k)\big)^2] \\
=& \E[ \max_{\tau \in \R} 2\big(R_k - \hat{r} + q(S_k', A_k') - q(S_k, A_k)\big)\tau - \tau^2] \\
=& \max_{\tau \in \R^\nsa} \E[ 2\big(R_k - \hat{r} + q(S_k', A_k') - q(S_k, A_k)\big)\tau(S_k, A_k) - \tau(S_k, A_k)^2].
\end{align}
When using function approximation,
we assume $q: \mathcal{S} \times \mathcal{A} \to \R$ is parameterized by $\theta \in \R^{K_1}$ and consider the following problem:
\begin{align}
&\min_{\theta \in \R^{K_1}, \hat{r} \in \R} \, \max_{\theta_\tau \in \R^{K_2}} \E[L_k], \qq{where} \\
&L_k \doteq 2\big(R_k - \hat{r} + q(S_k', A_k') - q(S_k, A_k)\big)\tau(S_k, A_k) - \tau(S_k, A_k)^2
\end{align}
Here the auxiliary function $\tau: \mathcal{S} \times \mathcal{A} \to \R$ is parameterized by $\theta_\tau \in \R^{K_2}$.
Diff-GQ1 updates are then
\begin{align}
\theta_{k+1} &\doteq \theta_k - \alpha \nabla_\theta L_k, \\
\hat{r}_{k+1} &\doteq \hat{r}_k - \alpha \nabla_{\hat{r}} L_k, \\
\theta_{\tau, k+1} &\doteq \theta_{\tau, k} + \alpha \nabla_{\theta_\tau} L_k.
\end{align}
If both $q$ and $\tau$ are linear,
the above updates are the same as~\eqref{eq:gq1-update} with $\eta = 0$.

\subsection{Diff-GQ2 with Nonlinear Function Approximation}
\label{sec:neural-gq2}
Let $q \in \R^{\nsa}$ be our estimates for the differential action-value function,  
we have
\begin{align}
&\norm{r - d_\mu^\top (r + P_\pi q - q)\tb{1} + P_\pi q - q}^2_D \\
=& \E[\big(R_k - d_\mu^\top (r + P_\pi q - q) + q(S_k', A_k') - q(S_k, A_k)\big)^2] \\
=& \E[ \max_{\tau \in \R} 2\big(R_k - d_\mu^\top (r + P_\pi q - q) + q(S_k', A_k') - q(S_k, A_k)\big)\tau - \tau^2] \\
=& \max_{\tau \in \R^\nsa} \E[ 2\big(R_k - d_\mu^\top (r + P_\pi q - q) + q(S_k', A_k') - q(S_k, A_k)\big)\tau(S_k, A_k) - \tau(S_k, A_k)^2].
\end{align}
When using function approximation,
we assume $q: \mathcal{S} \times \mathcal{A} \to \R$ is parameterized by $\theta \in \R^{K_1}$ and consider the following problem:
\begin{align}
&\min_{\theta \in \R^{K_1}} \, \max_{\theta_\tau \in \R^{K_2}} \E[L_k], \qq{where} \\
&L_k \doteq 2\Bigg(R_{k, 1} - \Big(R_{k, 2} + q(S_{k, 2}', A_{k, 2}') - q(S_{k, 2}, A_{k, 2}) \Big) + q(S_{k, 1}', A_{k, 1}') - q(S_{k, 1}, A_{k, 1})\Bigg)\tau(S_{k, 1}, A_{k, 1}) \\
&\quad- \tau(S_{k, 1}, A_{k, 1})^2.
\end{align}
Here the auxiliary function $\tau: \mathcal{S} \times \mathcal{A} \to \R$ is parameterized by $\theta_\tau \in \R^{K_2}$.
Diff-GQ2 updates are then
\begin{align}
\theta_{k+1} &\doteq \theta_k - \alpha \nabla_\theta L_k, \\
\theta_{\tau, k+1} &\doteq \theta_{\tau, k} + \alpha \nabla_{\theta_\tau} L_k ,\\
\hat{r}_{k+1} &\doteq \hat{r}_k + \alpha \Big( \frac{1}{2}\sum_{i=1}^2 \big( R_{k, i} + q(S_{k, i}', A_{k, i}') - q(S_{k, i}, A_{k, i}) \big) - \hat{r}_{k} \Big),
\end{align}
where $\hat{r}$ is a scalar estimate for the reward rate.
If both $q$ and $\tau$ are linear,
the above updates are the same as~\eqref{eq:gq1-update} with $\eta = 0$.

\section{Implementation Details}
\label{sec:impl}

\subsection{Boyan's Chain}
The state features we use are provided in Section C.1 of \citet{zhang2020gradientdice}.

\subsection{MuJoCo}
The dataset is composed by running the behavior policy for $10^6$ steps.
For GradientDICE,
we use neural networks to parameterize $\tau$ and $\nu$.
For Diff-SGQ,
we use neural networks to parameterize $q$.
For Diff-GQ1 and Diff-GQ2,
we use neural networks to parameterize $q$ and $\tau$.
All the networks have the standard architecture,
which are exactly the same as \citet{zhang2020gradientdice}.
They are two-hidden-layer networks with each hidden layer consisting of 64 hidden units and ReLU \citep{nair2010rectified} activation function.
The output layer does not have nonlinear activation function.
The $\hat{r}$ for all algorithms is always a global scalar parameter.
For GradientDICE, 
we find using an additional parameter $\hat{r}$ for reward rate prediction performs better and is much more computationally efficient than using $\frac{1}{N} \sum_{k=1}^{N} \tau(S_k, A_k) R_k$,
where $N = 10^6$ is the number of transitions in the dataset. 
As recommended by \citet{zhang2020gradientdice},
we use SGD to train all algorithms and do not use ridge regularization.
We sample $100$ transitions each step to form a minibatch for training.
Diff-GQ2 performs one gradient update every two steps.
For Diff-SGQ,
we update the target network every 100 steps.

\section{Other Experimental Results}
\subsection{Simulation of Assumption~\ref{assu:mupi}}
\label{sec:assumption-simulation}
We provide simulation results investigating when Assumption~\ref{assu:mupi} is likely to hold.
For each trial,
we first generate a random $P_\pi \in \R^{\nsa \times \nsa}$,
each row of which is randomly sampled from a simplex.
We then compute its stationary distribution $d_\pi$ analytically. 
The sampling distribution $d_\mu$ is composed by adding Gaussian noise to $d_\pi$,
i.e.,
$d_\mu(s, a) = d_\pi(s, a) + \mathcal{N}(0, \sigma^2)$.
We then normalize $d_\mu$ by $\frac{1}{\sum_{s, a}d_\mu(s, a)}$.
If the normalized $d_\mu$ still does not lie in a simplex,
we then apply softmax to $d_\mu$.
We use $D = diag(d_\mu)$
and generate the feature matrix $X \in \R^{\nsa \times K}$ randomly,
each element of which is sampled from $\mathcal{N}(0, 1)$ and $K$ is uniformly randomly sampled from $\qty{1, 2, \dots, \nsa}$.
We then analytically compute if $F$ in Assumption~\ref{assu:mupi} is positive semidefinite or not.
We conduct $10^4$ trials for each $(\nsa, \sigma, \xi)$ and report the probability that $F$ is positive semidefinite in Tables~\ref{tab:sim1} and~\ref{tab:sim2}.

\begin{table}[h]
\centering
\begin{tabular}{c|c|c|c|c|c}
 & $\sigma=0$ & $\sigma=0.001$ & $\sigma=0.01$ & $\sigma = 0.1$ & $\sigma=1$ \\ \hline
$\nsa=5$ & 0.70&0.69&0.70&0.65&0.52 \\\hline
$\nsa=10$ & 0.64&0.65&0.63&0.56&0.42 \\\hline
$\nsa=50$ & 0.55&0.50&0.44&0.41&0.36 \\\hline
$\nsa=100$ & 0.52&0.42&0.43&0.38&0.35 \\\hline
\end{tabular}
\caption{\label{tab:sim1} The probability of $F$ being positive semidefinite with $\xi = 0.9$.}
\end{table}
\begin{table}[h]
\centering
\begin{tabular}{c|c|c|c|c|c}
 & $\sigma=0$ & $\sigma=0.001$ & $\sigma=0.01$ & $\sigma = 0.1$ & $\sigma=1$ \\ \hline
$\nsa=5$ & 0.92&0.92&0.91&0.77&0.58 \\\hline
$\nsa=10$ & 0.92&0.92&0.84&0.68&0.50 \\\hline
$\nsa=50$ & 0.93&0.68&0.53&0.48&0.42 \\\hline
$\nsa=100$ & 0.93&0.51&0.49&0.45&0.42 \\\hline
\end{tabular}
\caption{\label{tab:sim2} The probability of $F$ being positive semidefinite with $\xi = 0.99$.}
\end{table}

\subsection{Additional Results on MuJoCo}
\label{sec:mujoco-expt}

The results on MuJoCo tasks with $\sigma = 0.1$ and $\sigma = 0.5$ are reported in Figure~\ref{fig:mujoco_result_2}.
\begin{figure*}[h]
\centering
\includegraphics[width=\textwidth]{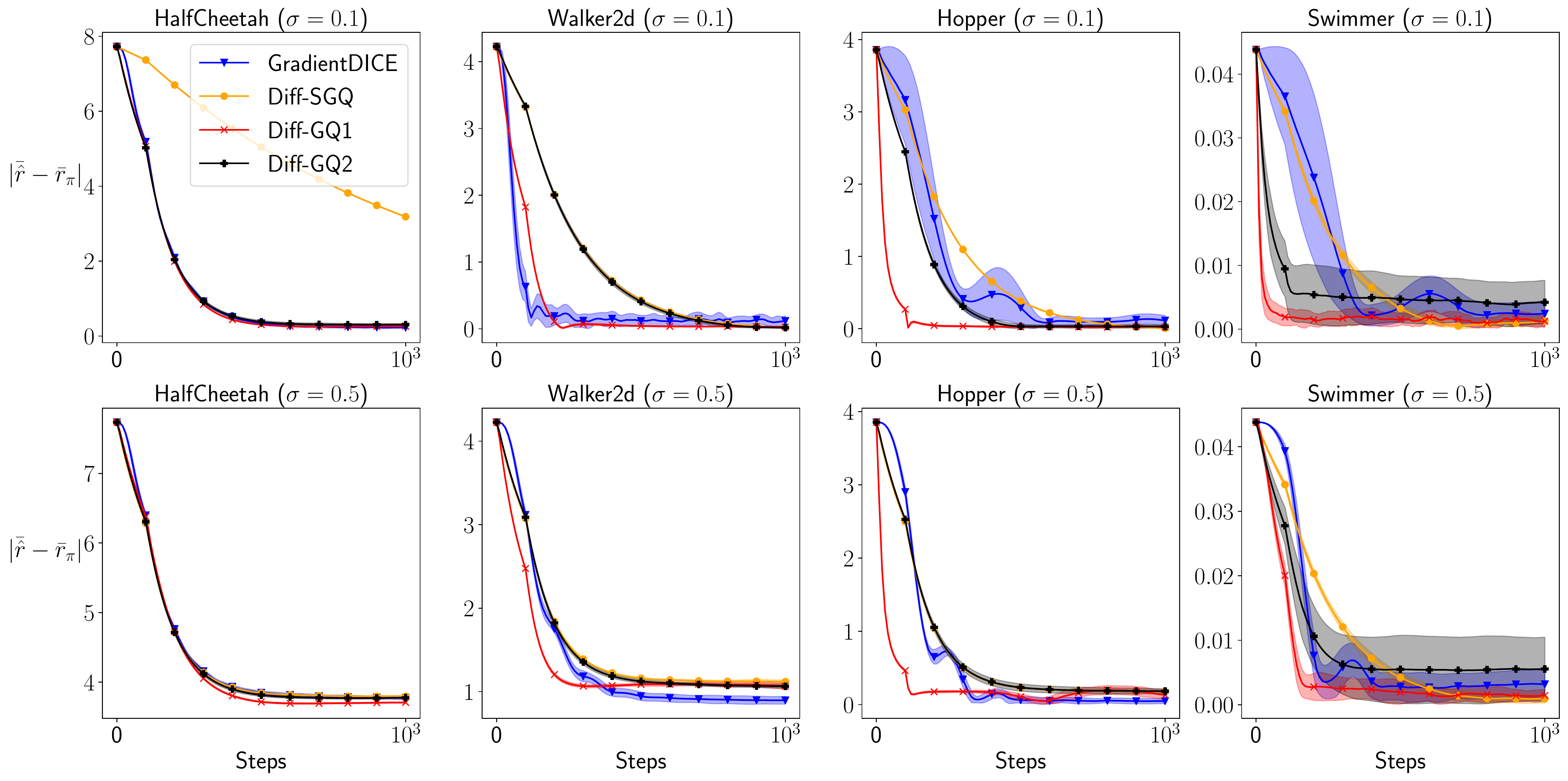}
\caption{\label{fig:mujoco_result_2}
MuJoCo tasks with with neural network function approximation.
$\bar{\hat{r}}$ is the average $\hat{r}$ of recent 100 steps.
}
\end{figure*}

\section{Other Differential Gradient $Q$ Evaluation Algorithms}
In this section,
we briefly discuss two other variants of Differential Gradient $Q$ Evaluation algorithms, Diff-GQ3 and Diff-GQ4.
Diff-GQ1 uses $Y$ as the feature matrix for both the primal variable $u$ and the dual variable $\nu$.
Consequently, to ensure the objective $J_{1, \eta}(u, \nu)$ is strictly concave in $\nu$,
$C$ has to be positive definite, i.e.,
Assumption~\ref{assu:nonconstant} is assumed.
Diff-GQ2 uses $X$ as the feature matrix for both the primal variable $w$ and the dual variable $\nu$.
Consequently, to obtain an estimate of the reward rate from $w$,
two i.i.d. samples are required.
To combine the advantages of both Diff-GQ1 and Diff-GQ2,
Diff-GQ3 uses $Y$ as the feature matrix for the primal variable $u$ but $X$ as the feature matrix for the dual variable $\nu$.
Diff-GQ3 considers the following MSPBE:
\begin{align}
\text{MSPBE}_3(u) = \norm{\Pi_X \bar \delta(u)}_D^2.
\end{align}
Similar to the derivation of Diff-GQ1,
we arrive at the Diff-GQ3 update:
\begin{align}
\label{eq diff gq3}
\delta_k &\doteq R_k - e_1^\top u_k + y_k'^\top u_k - y_k^\top u_k, \\
\nu_{k+1} &\doteq \nu_k + \alpha_k (\delta_k - x_k^\top \nu_k) x_k, \\
u_{k+1} &\doteq u_k + \alpha_k (y_k - y_k' + e_1)x_k^\top \nu_k - \alpha_k \eta u_k.
\end{align}
\begin{theorem}
\label{thm gq3 convergence}
If Assumptions~\ref{assu:chain},~\ref{assu:linearly independent}, \&~\ref{assu: stepsize} hold,
and either $\eta > 0$ or that $A_3$ is nonsingular holds, 
almost surely,
the iterate $\qty{u_k}$ generated by Diff-GQ3~\eqref{eq diff gq3}
converges to
$\tilde u^*_\eta \doteq -(\eta I + A_3^\top C_2^{-1} A_3)^{-1}A_3^\top C_2^{-1}b_2$, where
\begin{align}
A_3 \doteq X^\top D(P_\pi - I)Y - X^\top d_\mu e_1^\top.
\end{align}
\end{theorem}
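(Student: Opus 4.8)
The plan is to recast the Diff-GQ3 recursion~\eqref{eq diff gq3} as a single linear stochastic approximation of the form $\kappa_{k+1} = \kappa_k + \alpha_k(G_{k+1}\kappa_k + h_{k+1})$ and then invoke Theorem~\ref{thm:borkar}, exactly as in the proof of Theorem~\ref{thm:gq1-convergence}. Setting $\kappa_k \doteq [\nu_k^\top, u_k^\top]^\top$ with $\nu_k \in \R^K$ and $u_k \in \R^{K+1}$, I would read off from the $\nu$- and $u$-updates the block quantities
\begin{align}
G_{k+1} \doteq \mqty[-x_k x_k^\top & x_k(y_k'-y_k-e_1)^\top \\ (y_k-y_k'+e_1)x_k^\top & -\eta I], \quad h_{k+1} \doteq \mqty[R_k x_k \\ \tb{0}],
\end{align}
and take expectations under $d_{\mu\pi}$. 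Using $\E[x_k x_k^\top] = C_2$, $\E[x_k y_k^\top] = X^\top D Y$, $\E[x_k y_k'^\top] = X^\top D P_\pi Y$ (because $P_\pi\tb{1}=\tb{1}$ and $D\tb{1}=d_\mu$), and $\E[x_k e_1^\top] = X^\top d_\mu e_1^\top$, the off-diagonal block collapses exactly to $A_3$, giving
\begin{align}
\bar G \doteq \E[G_{k+1}] = \mqty[-C_2 & A_3 \\ -A_3^\top & -\eta I], \quad \bar h \doteq \E[h_{k+1}] = \mqty[X^\top D r \\ \tb{0}].
\end{align}

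Next I would discharge the three hypotheses of Theorem~\ref{thm:borkar}. Assumption~\ref{assu:borkar-lr} is immediate from Assumption~\ref{assu: stepsize}. For the martingale-difference condition (Assumption~\ref{assu:borkar-mds}), since the tuple drawn at iteration $k$ is i.i.d.\ and independent of $\mathcal F_k$, the centered noise $M_{k+1} = G_{k+1}\kappa_k + h_{k+1} - \bar G\kappa_k - \bar h$ has zero conditional mean, and the finiteness of $\mathcal S,\mathcal A$ bounds the features, yielding $\E[\norm{M_{k+1}}^2 \mid \mathcal F_k] \le C(1+\norm{\kappa_k}^2)$; this is verbatim the argument in Section~\ref{sec:proof-gq1}.

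The crux is Assumption~\ref{assu:borkar-nd}, that $\bar G$ is Hurwitz, and I would reuse the eigenvector computation from the Diff-GQ1 proof. For a normalized eigenpair $(\lambda,z)$ with $z=[z_1^\top,z_2^\top]^\top$, expanding $\lambda = z^H\bar G z$ gives $\lambda = -z_1^H C_2 z_1 + (z_1^H A_3 z_2 - z_2^H A_3^\top z_1) - \eta z_2^H z_2$; the parenthesized term is purely imaginary (its two summands are complex conjugates) and $C_2 \succ 0$ under Assumptions~\ref{assu: positive dmu} and~\ref{assu:linearly independent}, so $\Re(\lambda) = -z_1^H C_2 z_1 - \eta z_2^H z_2 \le 0$. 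When $\eta>0$ this is \emph{strictly} negative, since $\Re(\lambda)=0$ would force $z_1=\tb{0}$ and $z_2=\tb{0}$, contradicting $z\neq\tb{0}$; hence $\bar G$ is Hurwitz (in particular nonsingular), and Theorem~\ref{thm:borkar} delivers $\kappa_k \to -\bar G^{-1}\bar h$ almost surely. Solving $\bar G\,[\nu^{*\top},u^{*\top}]^\top = -\bar h$ by the Schur complement of the invertible block $-C_2$ yields the lower half $u^* = -(\eta I + A_3^\top C_2^{-1}A_3)^{-1}A_3^\top C_2^{-1}(X^\top D r)$. Identifying this with the stated $\tilde u^*_\eta$ then reduces to checking $A_3^\top C_2^{-1}X^\top D r = A_3^\top C_2^{-1}b_2$, i.e.\ $A_3^\top C_2^{-1}X^\top d_\mu = \tb{0}$, which holds under the zero-mean feature convention of Assumption~\ref{assu: 0 mean feature vector}.

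I expect the $\eta=0$ branch (the ``$A_3$ nonsingular'' case) to be the genuine obstacle, and it is not covered by the Diff-GQ1 template. The difficulty is dimensional: $A_3 \in \R^{K\times(K+1)}$ is rectangular, so $A_3^\top C_2^{-1} A_3$ has rank at most $K < K+1$ and is singular. Consequently, with $\eta=0$ the Schur complement degenerates, $\bar G$ acquires a zero eigenvalue, and the expression $-(A_3^\top C_2^{-1}A_3)^{-1}(\cdot)$ ceases to be well defined; intuitively, projecting the Bellman error only onto the $K$-dimensional column space of $X$ leaves the constant component of $u$ underdetermined. Unlike Diff-GQ1, where the analogous square matrix $A$ is invertible and pins down a unique fixed point, here I would instead handle $\eta=0$ by the pseudoinverse/limiting device used for Diff-GQ2 in Section~\ref{sec:proof-gq2} — passing $\eta\downarrow 0$ and characterizing $\lim_{\eta\downarrow0}\tilde u^*_\eta$ through the Moore--Penrose pseudoinverse of $C_2^{-1/2}A_3$ — rather than applying Theorem~\ref{thm:borkar} directly.
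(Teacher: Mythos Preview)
Your reformulation of Diff-GQ3 as a single linear stochastic approximation with
\begin{align}
\bar G = \mqty[-C_2 & A_3 \\ -A_3^\top & -\eta I], \qquad \bar h = \mqty[X^\top D r \\ \tb{0}],
\end{align}
together with the eigenvector computation showing $\Re(\lambda) = -z_1^H C_2 z_1 - \eta \norm{z_2}^2 < 0$ whenever $\eta > 0$, is exactly what the paper intends: its proof is declared to be ``the same as the proof of Theorem~\ref{thm:gq1-convergence} up to change of notations and thus omitted.'' For the $\eta > 0$ branch your plan is correct and coincides with the paper.

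You have, however, put your finger on two genuine defects in the theorem \emph{as stated}, neither of which the paper's one-line proof addresses. First, the Schur-complement limit you compute involves $X^\top D r$, not $b_2 = X^\top(D - d_\mu d_\mu^\top)r$; identifying the two indeed requires $X^\top d_\mu = \tb{0}$ (Assumption~\ref{assu: 0 mean feature vector}), which is not among the listed hypotheses. The natural analogue of $b$ from Diff-GQ1 here is $X^\top D r$, so the appearance of $b_2$ is most plausibly a slip in the statement rather than something to be proved. Second, your dimensional objection to the $\eta = 0$ branch is correct and decisive: $A_3 \in \R^{K\times(K+1)}$ is rectangular, so ``$A_3$ nonsingular'' is ill-posed, $A_3^\top C_2^{-1} A_3 \in \R^{(K+1)\times(K+1)}$ has rank at most $K$, and $\bar G$ necessarily acquires a zero eigenvalue. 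The Diff-GQ1 template therefore cannot cover this case by a mere change of notation; your suggested pseudoinverse/limiting argument in the style of Section~\ref{sec:proof-gq2} is the right repair, but it would alter the statement, not just the proof. The paper itself flags Diff-GQ3 as preliminary (``We leave more analysis of Diff-GQ3 for future work''), which is consistent with what you have found.
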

The proof is the same as the proof of Theorem~\ref{thm:gq1-convergence} up to change of notations and thus omitted.
If $\eta = 0$ and $A_3$ is nonsingular,
it is easy to see that
\begin{align}
\tilde u_0^* = - A_3^{-1}b_2
\end{align}
and $\tilde u_0^*$ is the unique minimizer of MSPBE$_3(u)$.
Though in the tabular setting (i.e., $X = I$),
this $u_0^*$ is the TD fixed point $u_\text{TD}$,
in general they are not the same.
Moreover,
in Diff-GQ3,
we apply ridge regularization to $u = [\hat r, w^\top]^\top$.
If ridge is applied to only $w$ like Diff-GQ1 and Diff-GQ2,
the current proof of Theorem~\ref{thm gq3 convergence} will not hold.
We leave more analysis of Diff-GQ3 for future work.

We now briefly describe Diff-GQ4.
Given a reward rate estimate $\hat r$, 
we define 
\begin{align}
\text{MSPBE}_4(w; \hat r) \doteq \norm{\Pi_X(r - \hat r \tb{1} + P_\pi Xw - Xw)}_D^2.
\end{align}
Importantly, in MSPBE$_4$, 
$\hat r$ is fixed and is not a learnable parameter of this MSPBE.
By contrast, in MSPBE$_3$, both $\hat r$ and $w$ are learnable parameters of the MSPBE.
Diff-GQ4 updates $\hat r$ in the same way as Diff-SGQ but updates $w$ following $\nabla_w \text{MSPBE}_3(w; \hat r)$ under the current $\hat r$, i.e.,
\begin{align}
\hat r_{k+1} &\doteq \hat r_k + \alpha_k (R_k + x_k'^\top w_k - x_k^\top w_k - \hat r_k), \\
\nu_{k+1} &\doteq \nu_k + \alpha_k (R_k - \hat r_k + x_k'^\top w_k - x_k^\top w_k - x_k^\top \nu_k) x_k, \\
w_{k+1} &\doteq w_k + \alpha_k (x_k - x_k')x_k^\top \nu_k - \alpha_k \eta w_k.
\end{align}
Our preliminary work confirms the convergence of Diff-GQ4 when $\eta$ is sufficiently large.
We leave the analysis of Diff-GQ4 with a general $\eta$ and its fixed point for future work.

\end{document}